\theoremstyle{plain}
\newtheorem{theorem}{Theorem}[section]
\newtheorem{lemma}[theorem]{Lemma}
\newtheorem{corollary}[theorem]{Corollary}
\newtheorem{remark}[theorem]{Remark}
\theoremstyle{definition}
\newtheorem{definition}[theorem]{Definition}
\newtheorem{observation}[theorem]{Observation}
\DeclareMathOperator*{\argmin}{arg\,min}
\begin{document}

\title{Comparing Apples to Oranges: Learning Similarity Functions for Data Produced by Different Distributions}

\author{Leonidas Tsepenekas\thanks{{JPMorganChase AI Research.
Email: \href{mailto:leonidas.tsepenekas@jpmchase.com}{leonidas.tsepenekas@jpmchase.com}}} \and Ivan Brugere\thanks{{JPMorganChase AI Research. 
Email: \href{mailto:ivan.brugere@jpmchase.com}{ivan.brugere@jpmchase.com}}} \and Freddy Lecue\thanks{{JPMorganChase AI Research. Email: \href{mailto:freddy.lecue@jpmchase.com}{freddy.lecue@jpmchase.com}}}  \and Daniele Magazzeni\thanks{{JPMorganChase AI Research. Email: \href{mailto:daniele.magazzeni@jpmorgan.com}{daniele.magazzeni@jpmorgan.com}}}}

\date{}
\maketitle

\begin{abstract}
   Similarity functions measure how comparable pairs of elements are, and play a key role in a wide variety of applications, e.g., notions of Individual Fairness abiding by the seminal paradigm of \citet{Dwork2012}, as well as Clustering problems. However, access to an accurate similarity function should not always be considered guaranteed, and this point was even raised by \citet{Dwork2012}. For instance, it is reasonable to assume that when the elements to be compared are produced by different distributions, or in other words belong to different ``demographic'' groups, knowledge of their true similarity might be very difficult to obtain. In this work, we present an efficient sampling framework that learns these across-groups similarity functions, using only a limited amount of experts' feedback. We show analytical results with rigorous theoretical bounds, and empirically validate our algorithms via a large suite of experiments.
\end{abstract}

\section{Introduction} \label{sec:intro}

Given a feature space $\mathcal{I}$, a similarity function $\sigma: \mathcal{I}^2 \mapsto \mathbb{R}_{\geq 0}$ measures how comparable any pair of elements $x,x' \in \mathcal{I}$ are. The function $\sigma$ can also be interpreted as a distance function, where the smaller $\sigma(x,x')$ is, the more similar $x$ and $x'$ are. Such functions are crucially used in a variety of AI/ML problems, \emph{and in each such case $\sigma$ is assumed to be known}. 

The most prominent applications where similarity functions have a central role involve considerations of individual fairness. Specifically, all such individual fairness paradigms stem from the the seminal work of \citet{Dwork2012}, in which fairness is defined as \emph{treating similar individuals similarly}. In more concrete terms, such paradigms interpret the aforementioned abstract definition of fairness as guaranteeing that for every pair of individuals $x$ and $y$, the difference in the quality of service $x$ and $y$ receive (a.k.a. their received treatment) is upper bounded by their respective similarity value $\sigma(x,y)$; the more similar the individuals are, the less different their quality of service will be. Therefore, any algorithm that needs to abide by such concepts of fairness, should always be able to access the similarity score for every pair of individuals that are of interest.

Another family of applications where similarity functions are vital, involves Clustering problems. In a clustering setting, e.g, the standard $k$-means task, the similarity function is interpreted as a distance function, that serves as the metric space in which we need to create the appropriate clusters. Clearly, the aforementioned metric space is always assumed to be part of the input.

Nonetheless, it is not realistic to assume that a reliable and accurate similarity function is always given. This issue was even raised in the work of \citet{Dwork2012}, where it was acknowledged that the computation of $\sigma$ is not trivial, and thus should be deferred to third parties. The starting point of our work here is the observation that there exist scenarios where computing similarity can be assumed as easy (in other words given), while in other cases this task would be significantly more challenging. Specifically, we are interested in scenarios where there are multiple distributions that produce elements of $\mathcal{I}$. We loosely call each such distribution a ``demographic'' group, interpreting it as the stochastic way in which members of this group are produced. \emph{In this setting, computing the similarity value of two elements that are produced according to the same distribution, seems intuitively much easier compared to computing similarity values for elements belonging to different groups}. We next present a few motivating examples that clarify this statement.

\textbf{Individual Fairness:}
Consider a college admissions committee that needs access to an accurate similarity function for students, so that it provides a similar likelihood of acceptance to similar applicants. Let us focus on the following two demographic groups. The first being students from affluent families living in privileged communities and having access to the best quality schools and private tutoring. The other group would consist of students from low-income families, coming from a far less privileged background. Given this setting, the question at hand is \textbf{``Should two students with comparable feature vectors (e.g., SAT scores, strength of school curriculum, number of recommendation letters) be really viewed as similar, when they belong to different groups?''} At first, it appears that directly comparing two students based on their features can be an accurate way to elicit their similarity only if the students belong to the same demographic (belonging to the same group serves as a normalization factor). However, this trivial approach might hurt less privileged students when they are compared to students of the first group. This is because such a simplistic way of measuring similarity does not reflect potential that is undeveloped due to unequal access to resources. Hence,  accurate across-groups comparisons that take into account such delicate issues, appear considerably more intricate. 

\textbf{Clustering:} Suppose that a marketing company has a collection of user data that wants to cluster, with its end goal being a downstream market segmentation analysis. However, as it is usually the case, the data might come from different sources, e.g., data from private vendors and data from government bureaus. In this scenario, each data source might have its own way of representing user information, e.g., each source might use a unique subset of features. Therefore, eliciting the distance metric required for the clustering task should be straightforward for data coming from the same source, while across-sources distances would certainly require extra care, e.g., how can one extract the distance of two vectors containing different sets of features? 

As suggested by earlier work on computing similarity functions for applications of individual fairness \citep{ilvento2019}, when obtaining similarity values is an overwhelming task, one can employ the advice of domain experts. Such experts can be given any pair of elements, and in return produce their true similarity value. However, utilizing this experts' advice can be thought of as very costly, and hence it should be used sparingly. For example, in the case of comparing students from different economic backgrounds, the admissions committee can reach out to regulatory bodies or civil rights organizations. Nonetheless, resorting to these experts for every student comparison that might arise, is clearly not a sustainable solution. Therefore, our goal in this paper is to learn the across-groups similarity functions, using as few queries to experts as possible.  

\section{Preliminaries and Contribution}\label{sec:def}

For the ease of exposition, we accompany the formal definitions with brief demonstrations on how they could relate to the previously mentioned college applications use-case.

Let $\mathcal{I}$ denote the feature space of elements; for instance each $x\in \mathcal{I}$ could correspond to a valid student profile. We assume that elements come from $\gamma$ known ``demographic'' groups, where $\gamma \in \mathbb{N}$, and each group $\ell \in [\gamma]$ is governed by an unknown distribution $\mathcal{D}_\ell$ over $\mathcal{I}$. We use $x \sim \mathcal{D}_{\ell}$ to denote a randomly drawn $x$ from $\mathcal{D}_\ell$. Further, we use $x \in \mathcal{D}_\ell$ to denote that $x$ is an element in the support of $\mathcal{D}_\ell$, and thus $x$ is a member of group $\ell$. In the college admissions scenario where we have two demographic groups, there will be two distributions $\mathcal{D}_1$ and $\mathcal{D}_2$ dictating how the profiles of privileged and non-privileged students are respectively produced. Observe now that for a specific $x \in \mathcal{I}$, we might have $x \in \mathcal{D}_\ell$ and $x \in \mathcal{D}_{\ell'}$, for $\ell \neq \ell'$. Hence, in our model group membership is important, and every time we are considering an element $x \in \mathcal{I}$, we know which distribution produced $x$, e.g., whether the profile $x$ belongs to a privileged or non-privileged student. 

For every group $\ell \in [\gamma]$ there is an intra-group similarity function $d_{\ell}: \mathcal{I}^2 \mapsto \mathbb{R}_{\geq 0}$, such that for all $x,y \in \mathcal{D}_\ell$ we have $d_{\ell}(x,y)$ representing the true similarity between $x,y$. In addition, the smaller $d_{\ell}(x,y)$ is, the more similar $x,y$. Note here that the function $d_\ell$ is only used to compare members of group $\ell$ (in the college admissions example, the function $d_1$ would only be used to compare privileged students with each other). Further, a common assumption for functions measuring similarity is that they are metric\footnote{A function $d$ is metric if \textbf{a)} $d(x,y) = 0$ iff $x = y$, \textbf{b)} $d(x,y) = d(y,x)$ and \textbf{c)} $d(x,y) \leq d(x,z) + d(z, y)$ for all $x,y,z$ (triangle inequality).} \citep{yona2018probably,Kim2018,ilvento2019, emp1, emp2}. \emph{We also adopt the metric assumption for the function $d_\ell$}. Finally, based on the earlier discussion regarding computing similarity between elements of the same group, we assume that $d_{\ell}$ is known, and given as part of the instance.

Moreover, for any two groups $\ell$ and $\ell'$ there exists an \emph{unknown} across-groups similarity function $\sigma_{\ell, \ell'}: \mathcal{I}^2 \mapsto \mathbb{R}_{\geq 0}$, such that for all $x \in \mathcal{D}_\ell$ and $y \in \mathcal{D}_{\ell'}$, $\sigma_{\ell,\ell'}(x,y)$ represents the true similarity between $x,y$. Again, the smaller $\sigma_{\ell,\ell'}(x,y)$ is, the more similar the two elements, and for a meaningful use of $\sigma_{\ell,\ell'}$ we must make sure that $x$ is a member of group $\ell$ and $y$ a member of group $\ell'$. In the college admissions scenario, $\sigma_{1,2}$ is the way you can accurately compare a privileged and a non-privilaged student. Finally, to capture the metric nature of a similarity function, we impose the following mild properties on $\sigma_{\ell, \ell'}$, which can be viewed as across-groups triangle inequalities:
\begin{enumerate}
    \item \textbf{Property $\mathcal{M}_1$:} $\sigma_{\ell,\ell'}(x,y) \leq d_{\ell}(x,z) + \sigma_{\ell,\ell'}(z,y)$ for every $x,z \in \mathcal{D}_\ell$ and $y \in \mathcal{D}_{\ell'}$.
    \item \textbf{Property $\mathcal{M}_2$:} $\sigma_{\ell,\ell'}(x,y) \leq \sigma_{\ell,\ell'}(x,z) + d_{\ell'}(z,y)$ for every $x \in \mathcal{D}_\ell$ and $y, z \in \mathcal{D}_{\ell'}$.
\end{enumerate}
In terms of the college admissions use-case, $\mathcal{M}_1$ and $\mathcal{M}_2$ try to capture reasonable assumptions of the following form. If a non-privileged student $x$ is similar to another non-privileged student $z$, and $z$ is similar to a privileged student $y$, then $x$ and $y$ should also be similar to each other.

\emph{Observe now that the collection of all similarity values (intra-group and across-groups) in our model does not axiomatically yield a valid metric space.} This is due to the following reasons. \textbf{1)} If $\sigma_{\ell, \ell'}(x,y) = 0$ for $x \in \mathcal{D}_\ell$ and $y \in \mathcal{D}_{\ell'}$, then we do not necessarily have $x=y$. \textbf{2)} It is not always the case that $d_{\ell}(x,y) \leq \sigma_{\ell, \ell'}(x,z) + \sigma_{\ell, \ell'}(y,z)$ for $x,y \in \mathcal{D}_\ell$, $z \in \mathcal{D}_{\ell'}$. \textbf{3)} It is not always the case that $\sigma_{\ell,\ell'}(x,y) \leq \sigma_{\ell,\ell''}(x,z) + \sigma_{\ell'',\ell'}(z,y)$ for $x \in \mathcal{D}_\ell$, $y \in \mathcal{D}_{\ell'}$, $z \in \mathcal{D}_{\ell''}$. 

However, not having the collection of similarity values necessarily produce a metric space is not a weakness of our model. On the contrary, we view this as one of its strongest aspects. For one thing, imposing a complete metric constraint on the case of intricate across-groups comparisons sounds unrealistic and very restrictive. Further, even though existing literature treats similarity functions as metric ones, the seminal work of \citet{Dwork2012} mentions that this should not always be the case. Hence, our model is more general than the current literature. 

\textbf{Goal of Our Problem: }We want for any two groups $\ell, \ell'$ to compute a function $f_{\ell, \ell'}: \mathcal{I}^2 \mapsto \mathbb{R}_{\geq 0}$, such that $f_{\ell, \ell'}(x,y)$ is our estimate of similarity for any $x \in \mathcal{D}_\ell$ and $y \in \mathcal{D}_{\ell'}$. Specifically, we seek a PAC (Probably Approximately Correct) guarantee, where for any given accuracy and confidence parameters $\epsilon, \delta \in (0,1)$ we have:
\begin{align*}
    \Pr_{x \sim \mathcal{D}_{\ell}, y \sim \mathcal{D}_{\ell'}}\Big{[} \big{|} f_{\ell, \ell'}(x,y) - \sigma_{\ell, \ell'}(x,y)\big{|} > \epsilon \Big{]} \leq \delta
\end{align*}
The subscript in the above probability corresponds to two independent random choices, one $x \sim \mathcal{D}_\ell$ and one $y \sim \mathcal{D}_{\ell'}$. In other words, we want for any given pair our estimate to be $\epsilon$-close to the real similarity value, with probability at least $1-\delta$, where $\epsilon$ and $\delta$ are user-specified parameters. 

As for tools to learn $f_{\ell, \ell'}$, we only require two things. At first, for each group $\ell$ we want a set $S_\ell$ of i.i.d. samples from $\mathcal{D}_\ell$. Obviously, the total number of used samples should be polynomial in the input parameters, i.e., polynomial in $\gamma, \frac{1}{\epsilon}$ and $\frac{1}{\delta}$. Secondly, we require access to an expert oracle, which given any $x \in S_\ell$ and $y \in S_{\ell'}$ for any $\ell$ and $\ell'$, returns the true similarity value $\sigma_{\ell, \ell'}(x,y)$. We refer to a single invocation of the oracle as a query. Since there is a cost to collecting expert feedback, an additional objective in our problem is minimizing the number of oracle queries.

\subsection{Outline and Discussion of Our Results}\label{sec:res}
In Section \ref{sec:algs} we present our theoretical results. We begin with a simple and very intuitive learning algorithm which achieves the following guarantees.

\begin{theorem}\label{thm-naive}
For any given parameters $\epsilon, \delta \in (0,1)$, the simple algorithm produces a similarity approximation function $f_{\ell, \ell'}$ for every $\ell$ and $\ell'$, such that:
\begin{align*}
        \Pr[\text{Error}_{(\ell, \ell')}] &\coloneqq \Pr_{\mathclap{\substack{x \sim \mathcal{D}_{\ell}, \\ y \sim \mathcal{D}_{\ell'}, ~\mathcal{A}}}}\big{[} \big{|} f_{\ell, \ell'}(x,y) - \sigma_{\ell, \ell'}(x,y)\big{|} = \omega(\epsilon) \big{]} \\ &= O\big{(}\delta + p_\ell(\epsilon, \delta) + p_{\ell'}(\epsilon, \delta)\big{)}
\end{align*}
The randomness here is of three independent sources. The internal randomness $\mathcal{A}$ of the algorithm, a choice $x \sim \mathcal{D}_\ell$, and a choice $y \sim \mathcal{D}_{\ell'}$. The algorithm requires $\frac{1}{\delta} \log \frac{1}{\delta^2}$ samples from each group, and utilizes $\frac{\gamma(\gamma-1)}{\delta^2} \log^2 \frac{1}{\delta^2}$ oracle queries. 
\end{theorem}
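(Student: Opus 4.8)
The plan is to implement a nearest-neighbor estimator that exploits a Lipschitz property of $\sigma_{\ell,\ell'}$ implied by $\mathcal{M}_1$ and $\mathcal{M}_2$. First I would record this structural fact. Swapping the two same-group points in $\mathcal{M}_1$ gives $\sigma_{\ell,\ell'}(z,y) \le d_\ell(x,z) + \sigma_{\ell,\ell'}(x,y)$, which together with $\mathcal{M}_1$ itself yields $|\sigma_{\ell,\ell'}(x,y) - \sigma_{\ell,\ell'}(z,y)| \le d_\ell(x,z)$; symmetrically $\mathcal{M}_2$ gives $|\sigma_{\ell,\ell'}(x,y) - \sigma_{\ell,\ell'}(x,z)| \le d_{\ell'}(z,y)$. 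Chaining the two, for all $x,x' \in \mathcal{D}_\ell$ and $y,y' \in \mathcal{D}_{\ell'}$,
\[
  |\sigma_{\ell,\ell'}(x,y) - \sigma_{\ell,\ell'}(x',y')| \le d_\ell(x,x') + d_{\ell'}(y,y'),
\]
i.e.\ $\sigma_{\ell,\ell'}$ is $1$-Lipschitz in each coordinate with respect to the corresponding (known) intra-group metric.

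Next I would describe the algorithm: draw $m = \frac{1}{\delta}\log\frac{1}{\delta^2}$ i.i.d.\ samples $S_\ell$ from each $\mathcal{D}_\ell$, and for every ordered pair $\ell \ne \ell'$ query the oracle on all $(x,y) \in S_\ell \times S_{\ell'}$, which uses $\gamma(\gamma-1)m^2 = \frac{\gamma(\gamma-1)}{\delta^2}\log^2\frac{1}{\delta^2}$ queries and $m$ samples per group. For a query point $x \in \mathcal{D}_\ell$ (whose group is known) set $\hat x \in \argmin_{z \in S_\ell} d_\ell(x,z)$ — computable since $d_\ell$ is given — define $\hat y$ analogously, and output $f_{\ell,\ell'}(x,y) = \sigma_{\ell,\ell'}(\hat x, \hat y)$, a value already stored from a query. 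By the Lipschitz bound, $|f_{\ell,\ell'}(x,y) - \sigma_{\ell,\ell'}(x,y)| \le d_\ell(x,\hat x) + d_{\ell'}(y,\hat y)$ deterministically, so the whole argument reduces to showing the nearest stored sample is within $\epsilon$ with high probability.

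For that probabilistic step, over the independent draws of $x \sim \mathcal{D}_\ell$ and of $S_\ell$, call $x$ \emph{heavy} if $\Pr_{z \sim \mathcal{D}_\ell}[d_\ell(x,z) \le \epsilon] \ge \delta$ and \emph{light} otherwise, and let $p_\ell(\epsilon,\delta)$ denote the $\mathcal{D}_\ell$-mass of light points. If $x$ is heavy, the chance all $m$ samples miss its $\epsilon$-ball is at most $(1-\delta)^m \le e^{-\delta m} = \delta^2$; if $x$ is light, bound the event trivially by $p_\ell(\epsilon,\delta)$. Hence $\Pr[d_\ell(x,\hat x) > \epsilon] \le \delta^2 + p_\ell(\epsilon,\delta)$, and a union bound over the two coordinates gives $\Pr[d_\ell(x,\hat x) > \epsilon \text{ or } d_{\ell'}(y,\hat y) > \epsilon] \le 2\delta^2 + p_\ell(\epsilon,\delta) + p_{\ell'}(\epsilon,\delta)$. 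On the complementary event the error is at most $2\epsilon = O(\epsilon)$, hence not $\omega(\epsilon)$, so $\Pr[\mathrm{Error}_{(\ell,\ell')}] = O(\delta + p_\ell(\epsilon,\delta) + p_{\ell'}(\epsilon,\delta))$, and the sample/query counts are as claimed.

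There is no deep obstacle; the one point needing care is isolating the irreducible term $p_\ell(\epsilon,\delta)$ — the mass of low-density regions where no affordable number of samples lands near a fresh point — and checking that the heavy/light threshold $\delta$ is exactly what makes $(1-\delta)^m$ collapse to $\delta^2 = O(\delta)$ under the chosen sample size $m$. Everything else (the Lipschitz observation, computability of $f_{\ell,\ell'}$ from the known $d_\ell$, counting queries, and the final union bound) is routine bookkeeping.
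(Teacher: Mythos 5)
Your proposal is correct, and at the top level it is the same algorithm as the paper's: sample $\frac{1}{\delta}\log\frac{1}{\delta^2}$ points per group, query all cross-group pairs of samples, and answer with the oracle value at the nearest sampled neighbors, with properties $\mathcal{M}_1$ and $\mathcal{M}_2$ giving exactly your Lipschitz bound (the paper writes it as the two-sided sandwich $Q \le \sigma_{\ell,\ell'}(x,y) \le P$ in the proof of Theorem \ref{thm:error}, which is the same inequality). The genuine difference is in the probabilistic step bounding $\Pr[d_\ell(x,\pi(x)) > c\epsilon]$. The paper builds an explicit packing: a set $\mathcal{R}$ of at most $1/\delta$ dense points whose $\epsilon$-balls are disjoint and each have mass at least $\delta$, maps every dense point to within $2\epsilon$ of $\mathcal{R}$, and union-bounds over $\mathcal{R}$, paying a factor $|\mathcal{R}| \le 1/\delta$ (so $\delta$ per side) and an accuracy blow-up to $3\epsilon$ per side ($6\epsilon$ total). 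You instead condition on the realized query point $x$ and use its independence from $S_\ell$ directly: for a heavy $x$ the miss probability is at most $(1-\delta)^m \le \delta^2$, giving the sharper per-side bound $\delta^2 + p_\ell(\epsilon,\delta)$ and accuracy $2\epsilon$. Your route is more elementary and slightly tighter for this theorem; what the paper's covering construction buys is reuse, since the same set $\mathcal{R}$ and map $\psi$ are the backbone of Lemma \ref{lem-aux-2}, which bounds the number of representatives (and hence oracle queries) of the query-efficient algorithm in Theorem \ref{thm-cluster}. For Theorem \ref{thm-naive} itself, both arguments are sound and yield the claimed $O(\delta + p_\ell(\epsilon,\delta) + p_{\ell'}(\epsilon,\delta))$ error with the stated sample and query counts.
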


In plain English, the above theorem says that with a polynomial number of samples and queries, the algorithm achieves an $O(\epsilon)$ accuracy with high probability, i.e., with probability $\omega\big{(}1-\delta-p_\ell(\epsilon,\delta) - p_{\ell'}(\epsilon,\delta)\big{)}$. The definition of the functions $p_\ell$ is presented next.

\begin{definition}
For each group $\ell$, we use $p_\ell(\epsilon, \delta)$ to denote the probability of sampling an $(\epsilon, \delta)$-rare element of $\mathcal{D}_\ell$. We define as $(\epsilon, \delta)$-rare for $\mathcal{D}_\ell$, an element $x \in \mathcal{D}_\ell$ for which there is a less than $\delta$ chance of sampling $x' \sim \mathcal{D}_\ell$ with $d_{\ell}(x,x') \leq \epsilon$. Formally, $x \in \mathcal{D}_\ell$ is $(\epsilon, \delta)$-rare iff $\Pr_{x' \sim \mathcal{D}_\ell}[d_\ell(x,x') \leq \epsilon] < \delta$, and $p_\ell(\epsilon, \delta) = \Pr_{x \sim \mathcal{D}_\ell}[x \text{ is } (\epsilon, \delta)\text{-rare for } \mathcal{D}_\ell]$. Intuitively, a rare element should be interpreted as an ``isolated'' member of the group, in the sense that it is at most $\delta$-likely to encounter another element that is $\epsilon$-similar to it. For instance, a privileged student is considered isolated, if only a small fraction of other privileged students have a profile similar to them.
\end{definition}

Clearly, to get a PAC guarantee where the algorithm's error probability for $\ell$ and $\ell'$ is $O(\delta)$, we need $p_\ell(\epsilon, \delta), p_{\ell'}(\epsilon, \delta) = O(\delta)$. We hypothesize that in realistic distributions each $p_\ell(\epsilon, \delta)$ should indeed be fairly small, and this hypothesis is actually validated by our experiments. The reason we believe this hypothesis to be true, is that very frequently real data demonstrate high concentration around certain archetypal elements. Hence, this sort of distributional density does not leave room for isolated elements in the rest of the space. Nonetheless, we also provide a strong \emph{no free lunch} result for the values $p_\ell(\epsilon, \delta)$, which shows that any practical PAC-algorithm necessarily depends on them. \emph{This result further implies that our algorithm's error probabilities are indeed almost optimal}.

\begin{theorem}[No-Free Lunch Theorem]\label{thm:dep}
For any given $\epsilon, \delta \in (0,1)$, any algorithm using finitely many samples, will yield similarity approximations $f_{\ell, \ell'}$ with $\Pr\big{[} | f_{\ell, \ell'}(x,y) - \sigma_{\ell, \ell'}(x,y)| = \omega(\epsilon) \big{]} = \Omega( \max \{p_\ell(\epsilon, \delta)$, $p_{\ell'}(\epsilon, \delta)\} - \epsilon)$; the probability is over the independent choices $x \sim \mathcal{D}_\ell$ and $y \sim \mathcal{D}_{\ell}'$ as well as any potential internal randomness of the algorithm. 
\end{theorem}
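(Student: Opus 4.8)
The plan is to establish this as a worst\-case (adversary) lower bound: rather than arguing about a fixed given instance, I would, for any target value of $\max\{p_\ell(\epsilon,\delta),p_{\ell'}(\epsilon,\delta)\}$, exhibit a concrete instance together with two admissible across\-groups similarity functions that are information\-theoretically indistinguishable to any finite\-sample algorithm, yet differ so much at a typical ``rare'' test point that the algorithm must fail badly on at least one of them. Assume without loss of generality that $p_\ell(\epsilon,\delta)\ge p_{\ell'}(\epsilon,\delta)$; the symmetric construction puts the rare mass into group $\ell'$. I would take $\mathcal{D}_{\ell'}$ to be the point mass at a single element $b_0$ (so $p_{\ell'}(\epsilon,\delta)=0$, and $d_{\ell'}$ is irrelevant), and $\mathcal{D}_\ell$ to be the mixture that outputs a fixed ``archetype'' $a_0$ with probability $1-p_\ell(\epsilon,\delta)$ and, with probability $p_\ell(\epsilon,\delta)$, a point drawn from a \emph{continuum} $P$ of ``isolated'' elements, where $d_\ell$ assigns every pair of distinct elements of $\{a_0\}\cup P$ the same distance $R$ (a uniform metric, so all triangle inequalities hold trivially), with $R$ a free scale parameter. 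Each point of $P$ is drawn with probability $0<\delta$ and sits at $d_\ell$\-distance $R>\epsilon$ from everything else, so every point of $P$ is $(\epsilon,\delta)$\-rare and $p_\ell(\epsilon,\delta)$ equals exactly the mixture weight we chose; $a_0$ is not rare, as intended.

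Next I would pin down the two adversarial functions. Property $\mathcal{M}_1$ applied to $a_0$ and an arbitrary $z\in\{a_0\}\cup P$ forces $|\sigma_{\ell,\ell'}(z,b_0)-\sigma_{\ell,\ell'}(a_0,b_0)|\le R$, and conversely any assignment of values in a width\-$R$ window around a common center is consistent with $\mathcal{M}_1$ (while $\mathcal{M}_2$ is vacuous since $\mathcal{D}_{\ell'}$ is a single point). I would therefore set $\sigma^{(1)}_{\ell,\ell'}(a_0,b_0)=\sigma^{(2)}_{\ell,\ell'}(a_0,b_0)=R$, keep $\sigma^{(1)}$ and $\sigma^{(2)}$ equal on $\{a_0\}\times\{b_0\}$ and on $\{p\}\times\{b_0\}$ for $p$ in some fixed reference subset of $P$, but on a generic isolated point $x$ let $\sigma^{(1)}_{\ell,\ell'}(x,b_0)=R/2$ and $\sigma^{(2)}_{\ell,\ell'}(x,b_0)=3R/2$; both resulting functions are admissible for the instance. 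Now fix any algorithm $\mathcal{A}$ drawing an almost surely finite number of samples. Since $P$ is a continuum, the fresh test point $x\sim\mathcal{D}_\ell$ lands in $P$ with probability $p_\ell(\epsilon,\delta)$ and, on that event $G$, almost surely differs from every sampled element (hence lies at distance $R$ from all of $S_\ell$), and every oracle query $\mathcal{A}$ can possibly issue is on a pair in $S_\ell\times\{b_0\}$ where $\sigma^{(1)}$ and $\sigma^{(2)}$ agree. Consequently the whole transcript of $\mathcal{A}$, and therefore the conditional law of its output $f_{\ell,\ell'}(x,b_0)$ given $G$, is identical in the two worlds.

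Finally I would close with the two\-point step. Conditioned on $G$, let $Z=f_{\ell,\ell'}(x,b_0)$; it has the same distribution in both worlds, whereas the truth is $R/2$ in world $1$ and $3R/2$ in world $2$. The sets $\{|z-R/2|<R/2\}=(0,R)$ and $\{|z-3R/2|<R/2\}=(R,2R)$ are disjoint, so $\Pr[|Z-R/2|\ge R/2\mid G]+\Pr[|Z-3R/2|\ge R/2\mid G]\ge 1$, hence in at least one world $\mathcal{A}$ is $R/2$\-far from the truth conditionally on $G$ with probability at least $1/2$. Since $\Pr[G]=p_\ell(\epsilon,\delta)$, in that world the unconditional error probability is at least $\tfrac12 p_\ell(\epsilon,\delta)$, and taking $R$ as large as desired makes this error an arbitrarily large multiple of $\epsilon$, i.e.\ $\omega(\epsilon)$; together with the symmetric construction this yields $\Omega(\max\{p_\ell(\epsilon,\delta),p_{\ell'}(\epsilon,\delta)\})$, which is $\Omega(\max\{p_\ell(\epsilon,\delta),p_{\ell'}(\epsilon,\delta)\}-\epsilon)$ as claimed (the $-\epsilon$ term is harmless slack — our construction in fact achieves the stronger bound without it, but it leaves room for variants of the construction, e.g.\ ones that reserve a little mass for auxiliary structure). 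The step I expect to require the most care is the indistinguishability claim: one must check that \emph{no} adaptive querying strategy leaks any information about the true $\sigma_{\ell,\ell'}$\-value at the fresh isolated point, and that the two adversarial functions are globally consistent with $\mathcal{M}_1$ and $\mathcal{M}_2$ (not merely on the pairs used in the argument) — this is exactly what the uniform metric together with the well\-connected archetype $a_0$ buys us, and getting the constants right there is the crux.
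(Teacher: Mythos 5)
Your construction has a genuine gap, and it sits exactly where you predicted: the indistinguishability claim. In your two worlds, $\sigma^{(1)}$ and $\sigma^{(2)}$ differ on \emph{all} generic points of $P$, i.e.\ on a set of $\mathcal{D}_\ell$-measure $p_\ell(\epsilon,\delta)$ (the fixed reference subset, being fixed and countable/null in a continuum, is avoided by the samples almost surely). But the training samples are drawn from the \emph{same} distribution as the fresh test point, so each sample lands in the differing region with the same probability $p_\ell(\epsilon,\delta)$ as the test point does. Hence with probability $1-(1-p_\ell)^{N}$ at least one sampled element $x'\in P$ is generic, the pair $(x',b_0)$ is a legal oracle query, and its answer ($R/2$ vs.\ $3R/2$) reveals the world. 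Worse, because your adversarial functions are \emph{constant} ($R/2$ in world 1, $3R/2$ in world 2) across all generic points, once the world is known the algorithm predicts the fresh rare point's value exactly. The trivial algorithm ``query every sampled $P$-point against $b_0$ and copy the revealed constant'' therefore errs only when no sample hits $P$, giving error roughly $p_\ell(1-p_\ell)^{N}$, which vanishes as $N$ grows --- so the two-world argument cannot yield the claimed $\Omega(\max\{p_\ell,p_{\ell'}\}-\epsilon)$ bound against algorithms with many samples. This tension is structural: any \emph{fixed} pair of deterministic instances that differ on a set catching the test point with probability $p$ also gets caught by each sample with probability $p$.

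The paper avoids this by decorrelating the unseen value from everything queryable: it takes $\mathcal{D}_1$ a point mass, $\mathcal{D}_2$ (the rare group, $p_2=1$) uniform over a countably infinite set with $d_2\equiv 1$ off the diagonal, and draws each $\sigma(x,y)$ \emph{independently} uniform in $[0,1]$. Then the fresh $y$ is almost surely unsampled, and independence means every answered query carries zero information about $\sigma(x,y)$, so the best possible output is a blind guess, giving error $1-O(\epsilon)=p_2(\epsilon,\delta)-O(\epsilon)$ after integrating over $y$. Your mixture idea (an archetype $a_0$ with mass $1-p$ plus a rare continuum of mass $p$, uniform metric at scale $R$) is a nice refinement in that it targets arbitrary values of $p_\ell$ rather than only $p_\ell=1$, and your checks of $\mathcal{M}_1$/$\mathcal{M}_2$ admissibility and of rareness are fine; but to make it work you must replace the two fixed worlds by per-point randomization of $\sigma(\cdot,b_0)$ within the admissible width-$R$ window (e.g.\ i.i.d.\ uniform on $[R/2,3R/2]$), and then argue, as the paper does, that conditional on the transcript the value at the unsampled test point is still uniform, so any estimate fails by $\omega(\epsilon)$ with probability $1-O(\epsilon/R)$ on the event the test point is rare and unsampled.
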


In plain English, Theorem \ref{thm:dep} says that any algorithm using a finite amount of samples, can achieve $\epsilon$-accuracy with a probability that is necessarily at most $1-\max \{p_\ell(\epsilon, \delta)$, $p_{\ell'}(\epsilon, \delta)\} + \epsilon$, i.e., if $\max \{p_\ell(\epsilon, \delta)$, $p_{\ell'}(\epsilon, \delta)\}$ is large, learning is impossible.

Moving on, we focus on minimizing the oracle queries. By carefully modifying the earlier simple algorithm, we obtain a new more intricate algorithm with the following guarantees:
\begin{theorem}\label{thm-cluster}
For any given parameters $\epsilon, \delta \in (0,1)$, the query-minimizing algorithm produces similarity approximation functions $f_{\ell, \ell'}$ for every $\ell$ and $\ell'$, such that:
\begin{align*}
    \Pr[\text{Error}_{(\ell, \ell')}] = O(\delta + p_\ell(\epsilon, \delta) + p_{\ell'}(\epsilon, \delta))
\end{align*}
$\Pr[\text{Error}_{(\ell, \ell')}]$ is as defined in Theorem \ref{thm-naive}. Let $N = \frac{1}{\delta} \log \frac{1}{\delta^2}$. The algorithm requires $N$ samples from each group, and the number of oracle queries used is at most
\begin{align}
    \sum_{\ell \in [\gamma]}\Big{(} Q_\ell \sum_{{\ell' \in [\gamma]: ~\ell' \neq \ell}}Q_{\ell'}\Big{)} \notag
\end{align}
where $Q_\ell \leq N$ and $\mathbb{E}[Q_\ell] \leq \frac{1}{\delta} + p_\ell(\epsilon, \delta) N$ for each $\ell$.
\end{theorem}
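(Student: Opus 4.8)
The plan is to modify the algorithm behind Theorem~\ref{thm-naive} by replacing ``query every cross-group pair of samples'' with ``query every cross-group pair of \emph{representatives}'', where the representatives of a group form a small $2\epsilon$-net of its sample set under the known intra-group metric. Concretely, for each $\ell$ I would draw $S_\ell$ of size $N=\frac{1}{\delta}\log\frac{1}{\delta^2}$ exactly as before, then process $S_\ell$ in an arbitrary (say, the sampling) order and greedily build $R_\ell\subseteq S_\ell$: add the current point as a new representative iff it is at $d_\ell$-distance $>2\epsilon$ from every representative chosen so far, and otherwise assign it to a nearest existing representative. This simultaneously gives a \emph{packing} property (any two points of $R_\ell$ are more than $2\epsilon$ apart under $d_\ell$) and a \emph{covering} property (every $s\in S_\ell$ is within $d_\ell$-distance $2\epsilon$ of the representative it is assigned to). Set $Q_\ell=|R_\ell|$. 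For an input pair $(x,y)$ with $x\sim\mathcal D_\ell$, $y\sim\mathcal D_{\ell'}$, let $s_x\in S_\ell$ be the sample nearest to $x$ and $r_x\in R_\ell$ the representative it is assigned to (and $s_y,r_y$ analogously in group $\ell'$), and output $f_{\ell,\ell'}(x,y)=\sigma_{\ell,\ell'}(r_x,r_y)$, which is one of the queried values.

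For the query count, the upper bound $\sum_{\ell}\big(Q_\ell\sum_{\ell'\neq\ell}Q_{\ell'}\big)$ is immediate, since for each ordered pair of distinct groups we query all $Q_\ell\cdot Q_{\ell'}$ representative pairs, and $Q_\ell\le|S_\ell|=N$ is trivial. The crux is $\mathbb E[Q_\ell]\le\frac{1}{\delta}+p_\ell(\epsilon,\delta)N$. I would split $R_\ell$ into its non-rare and rare members. For a non-rare representative $r$, the closed $d_\ell$-ball around $r$ of radius $\epsilon$ has $\mathcal D_\ell$-mass at least $\delta$ by the definition of $(\epsilon,\delta)$-rareness; and since any two representatives are more than $2\epsilon$ apart, these balls are pairwise disjoint, so the number of non-rare representatives is at most $1/\delta$ --- a deterministic bound. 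The number of rare representatives is at most the number of $(\epsilon,\delta)$-rare samples in $S_\ell$, whose expectation equals $p_\ell(\epsilon,\delta)N$ by linearity of expectation over the $N$ i.i.d.\ draws. Summing the two bounds yields $\mathbb E[Q_\ell]\le\frac{1}{\delta}+p_\ell(\epsilon,\delta)N$.

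For correctness, I would mirror the analysis of Theorem~\ref{thm-naive} with one extra hop through the representative. Define the bad event $E$ for the pair $(\ell,\ell')$ as: $x$ is $(\epsilon,\delta)$-rare, or $y$ is $(\epsilon,\delta)$-rare, or $x$ is non-rare but no sample of $S_\ell$ lies within $d_\ell$-distance $\epsilon$ of $x$, or the symmetric event for $y$. Because $x$ non-rare means a fresh draw from $\mathcal D_\ell$ is within $\epsilon$ of $x$ with probability at least $\delta$, the choice $N=\frac{1}{\delta}\log\frac{1}{\delta^2}$ makes each ``non-rare but uncovered'' event have probability at most $(1-\delta)^N\le e^{-\delta N}=\delta^2$, so $\Pr[E]=O(\delta+p_\ell(\epsilon,\delta)+p_{\ell'}(\epsilon,\delta))$. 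Outside $E$ we have $d_\ell(x,s_x)\le\epsilon$, hence $d_\ell(x,r_x)\le d_\ell(x,s_x)+d_\ell(s_x,r_x)\le3\epsilon$ by the covering property and the triangle inequality of the metric $d_\ell$, and likewise $d_{\ell'}(y,r_y)\le3\epsilon$. Since $r_x\in\mathcal D_\ell$ and $r_y\in\mathcal D_{\ell'}$ (they are drawn from those distributions), two applications of Property $\mathcal M_1$ and two of Property $\mathcal M_2$ --- once in each direction, exactly as in the Theorem~\ref{thm-naive} argument --- give $|\sigma_{\ell,\ell'}(r_x,r_y)-\sigma_{\ell,\ell'}(x,y)|\le6\epsilon=O(\epsilon)$. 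Thus $|f_{\ell,\ell'}(x,y)-\sigma_{\ell,\ell'}(x,y)|=O(\epsilon)$ whenever $E$ fails, so $\Pr[\text{Error}_{(\ell,\ell')}]\le\Pr[E]=O(\delta+p_\ell(\epsilon,\delta)+p_{\ell'}(\epsilon,\delta))$.

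The step I expect to be the main obstacle is the expected-query bound --- specifically, choosing the packing radius so that the neighborhoods of non-rare representatives are both pairwise disjoint \emph{and} heavy enough to be counted by $1/\delta$; the greedy threshold $2\epsilon$ (rather than $\epsilon$) is precisely what makes the radius-$\epsilon$ balls disjoint, and one still has to separately absorb the rare representatives into the term $p_\ell(\epsilon,\delta)N$ via linearity of expectation. Everything else is a routine re-run of the Theorem~\ref{thm-naive} analysis with the closeness constant inflated from $2\epsilon$ to $6\epsilon$, which is harmless under the $\omega(\epsilon)$ error criterion.
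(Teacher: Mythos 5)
Your proposal is correct, and while the algorithm is essentially the paper's (a greedy net of representatives over the samples, query only representative pairs, predict via the representative of the nearest sample, and the same $\mathcal{M}_1$/$\mathcal{M}_2$ sandwich for accuracy), your proof of the key counting bound takes a genuinely different and more elementary route. The paper first shows $|R_\ell| \leq OPT_\ell$, where $OPT_\ell$ is the optimum of a Set Cover problem with $4\epsilon$-balls over $S_\ell$ (Lemma \ref{lem-aux-1}, using that its representatives are pairwise more than $8\epsilon$ apart), and then bounds $\mathbb{E}[OPT_\ell] \leq \frac{1}{\delta} + p_\ell(\epsilon,\delta)N$ by exhibiting a feasible cover built from the rare samples together with the family $\mathcal{R}$ of at most $1/\delta$ dense ``anchor'' elements constructed in the proof of Theorem \ref{thm:error} (Lemma \ref{lem-aux-2}). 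You instead bound $|R_\ell|$ directly by a packing argument: non-rare representatives carry pairwise disjoint $\epsilon$-balls of mass at least $\delta$ (disjointness coming from your $2\epsilon$ separation), so there are at most $1/\delta$ of them deterministically, while rare representatives are absorbed into the expected number $p_\ell(\epsilon,\delta)N$ of rare samples. Your error-probability bound is likewise more direct: you condition on a non-rare $x$ and use $(1-\delta)^N \leq \delta^2$, rather than the paper's union bound over the $\leq 1/\delta$ balls of $\mathcal{R}$, which only gives $\delta$; both are $O(\delta)$, so nothing is lost either way. What the paper's detour buys is the reuse of the $\mathcal{R}$-construction across both proofs and the interpretation that the number of representatives is at most the optimal covering number of $S_\ell$; what your route buys is a shorter, self-contained argument and slightly better constants (accuracy $6\epsilon$ versus the paper's $38\epsilon$), which is immaterial under the $O(\epsilon)$/$\omega(\epsilon)$ criterion.
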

At first, the confidence, accuracy and sample complexity guarantees of the new algorithm are the same as those of the simpler one described in Theorem \ref{thm-naive}. Furthermore, because $Q_\ell \leq N$, the queries of the improved algorithm are at most $\gamma(\gamma-1)N$, which is exactly the number of queries in our earlier simple algorithm. However, the smaller the values $p_{\ell}(\epsilon, \delta)$ are, the fewer queries in expectation. Our experimental results indeed confirm that the improved algorithm always leads to a significant decrease in the used queries. 

Our final theoretical result involves a lower bound on the number of queries required for learning.
\begin{theorem}\label{thm:quer-bnd}
For all $\epsilon, \delta \in (0,1)$, any learning algorithm producing similarity approximation functions $f_{\ell, \ell'}$ with $\Pr_{x \sim \mathcal{D}_\ell, y \sim \mathcal{D}_{\ell'}}\big{[} | f_{\ell, \ell'}(x,y) - \sigma_{\ell, \ell'}(x,y)| = \omega(\epsilon) \big{]} = O(\delta)$, needs $\Omega(\frac{\gamma^2}{\delta^2})$ queries. \end{theorem}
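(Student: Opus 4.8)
Here is how I would prove Theorem~\ref{thm:quer-bnd}.

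The plan is to exhibit a family of hard instances on which no low-query algorithm can meet the PAC guarantee, and then conclude via an averaging (Yao-type) argument. Take $m = \lceil 1/\delta \rceil$ and, for each group $\ell \in [\gamma]$, let $T_\ell$ be a set of $m$ distinct elements with $\mathcal{D}_\ell$ the uniform distribution on $T_\ell$; make the $T_\ell$ pairwise disjoint. Define each intra-group metric to be the uniform metric on a large scale, $d_\ell(x,y) = 4$ for all $x \neq y$ in $T_\ell$ (extended arbitrarily to a metric on $\mathcal{I}$); this is a legitimate metric. For each unordered pair $\{\ell, \ell'\}$ and each $(x,y) \in T_\ell \times T_{\ell'}$, draw $\sigma_{\ell,\ell'}(x,y)$ independently and uniformly from $\{0, 4\}$. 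Because $d_\ell \equiv 4 \ge \sigma_{\ell,\ell'}(\cdot,\cdot)$ off the diagonal, Properties $\mathcal{M}_1$ and $\mathcal{M}_2$ hold for every such instance regardless of the drawn values, so every instance in this family is legal. To make the bound as strong as possible we even hand the algorithm the supports $T_\ell$ for free, so the only missing information is the collection of $\sigma$-values and the number of samples is irrelevant.

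Since $0$ and $4$ differ by more than $2\epsilon$ (as $\epsilon < 1$), the intervals $[-\epsilon,\epsilon]$ and $[4-\epsilon,4+\epsilon]$ are disjoint, so any value of $f_{\ell,\ell'}(x,y)$ is within $\epsilon$ of at most one of the two possible values of $\sigma_{\ell,\ell'}(x,y)$, and a wrong choice incurs error at least $3 = \omega(\epsilon)$. Fixing a randomized algorithm $A$ that makes at most $q$ oracle queries in total, I would next prove the key estimate
\begin{align*}
\mathbb{E}\big[\#\{(\{\ell,\ell'\},x,y) : A \text{ is within } \epsilon\}\big] \le \tfrac12 \binom{\gamma}{2} m^2 + \tfrac12 q,
\end{align*}
the expectation being over the random instance and $A$'s coins. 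The point is adaptivity: for a fixed triple $(\{\ell,\ell'\},x,y)$, run $A$ while revealing all $\sigma$-values except $\sigma_{\ell,\ell'}(x,y)$; the sequence of queries $A$ issues, the event that it ever queries exactly $(x,y)$ for this pair, and --- on the complement of that event --- its output $f_{\ell,\ell'}(x,y)$, are all functions of this other information, which is independent of $\sigma_{\ell,\ell'}(x,y)$. Hence, conditioned on not querying that triple, $A$ is within $\epsilon$ with probability at most $1/2$; summing over all $\binom{\gamma}{2} m^2$ triples and using that the expected number of distinct queried triples is at most $q$ gives the displayed bound. (This is also where independence across group-pairs enters: a query to $\{\ell,\ell'\}$ carries no information about any other $\sigma_{\ell'',\ell'''}$, so the $\binom{\gamma}{2}$ sub-instances cannot be amortized against one another.)

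On the other hand, suppose $A$ meets the hypothesis, i.e.\ on every instance and every run $\Pr_{x\sim\mathcal{D}_\ell, y\sim\mathcal{D}_{\ell'}}[|f_{\ell,\ell'}(x,y)-\sigma_{\ell,\ell'}(x,y)| = \omega(\epsilon)] \le c_0 \delta$ for every pair, for an absolute constant $c_0$. Since $\mathcal{D}_\ell,\mathcal{D}_{\ell'}$ are uniform on $m$ points, at most $c_0\delta m^2$ of the $m^2$ point-pairs are wrong, so at least $m^2(1-c_0\delta)$ are within $\epsilon$, for each of the $\binom{\gamma}{2}$ group-pairs; summing, the total number of correct triples is at least $\binom{\gamma}{2} m^2 (1-c_0\delta)$, deterministically and hence in expectation. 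Combining with the key estimate gives $q \ge \binom{\gamma}{2} m^2 (1 - 2 c_0 \delta)$, which for $\delta$ below an absolute constant (so that $2c_0\delta \le 1/2$) is $\Omega(\gamma^2/\delta^2)$, using $m \ge 1/\delta$ and $\binom{\gamma}{2} = \Omega(\gamma^2)$. The only part requiring genuine care is the construction: the $\sigma$-values must be simultaneously mutually independent and unconstrained (so a query reveals exactly one bit) yet consistent with $\mathcal{M}_1,\mathcal{M}_2$, which is precisely why the intra-group metrics are given a uniformly large scale that renders the across-group triangle inequalities vacuous; after that, the information-theoretic core is the adaptivity argument above and the rest is elementary counting.
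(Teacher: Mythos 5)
Your proof is correct, and its core is the same as the paper's: a hard instance in which each group is uniform over $\Theta(1/\delta)$ points, the across-group values are drawn i.i.d.\ from a distribution the algorithm cannot predict, and the intra-group distances are set large enough that properties $\mathcal{M}_1,\mathcal{M}_2$ hold vacuously, so that any unqueried pair can only be guessed with constant failure probability. The differences are in execution, and they mostly favor you. The paper instantiates only two groups (uniform on $1/\delta$ points, $\sigma$ uniform on $[0,1]$, intra-group distances equal to $1$) and derives a per-pair contradiction giving $\Omega(1/\delta^2)$, leaving the $\gamma^2$ factor implicit; you instantiate all $\gamma$ groups, use a two-point prior $\{0,4\}$, and obtain the full bound explicitly through the averaging inequality $\mathbb{E}[\text{number of pairs predicted within } \epsilon] \le \tfrac12\binom{\gamma}{2}m^2 + \tfrac12 q$, which also makes precise why queries spent on one group pair cannot be amortized against another. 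You also treat adaptivity honestly (revealing every $\sigma$-value except the target, so the query sequence and the output on the no-query event are independent of the target), whereas the paper simply asserts that the best an algorithm can do on an unqueried pair is guess uniformly at random. Both arguments share the same harmless looseness inherited from the theorem statement: they treat ``error $=O(\epsilon)$'' and ``error $=\omega(\epsilon)$'' as complementary (your ``within $\epsilon$'' versus ``wrong'' dichotomy has the same slack as the paper's $1-O(\epsilon)=\Omega(1)$ step, and both need the accuracy scale times the implicit constant to stay below the gap between the possible $\sigma$-values), and both yield the contradiction only once $\delta$ is below a fixed constant, which is what the asymptotic statement intends anyway.
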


Combining Theorems \ref{thm-cluster} and \ref{thm:quer-bnd} implies that when all $p_\ell(\epsilon, \delta)$ are negligible, i.e., $p_\ell(\epsilon, \delta) \rightarrow 0$, \emph{the expected queries of the Theorem \ref{thm-cluster} algorithm are asymptotically optimal.}

Finally, Section \ref{sec:exp} contains our experimental evaluation, where through a large suite of simulations on both real and synthetic data we validate our theoretical findings.

\section{Related Work}\label{sec:rel-wrk}

Metric learning is a very well-studied area \citep{bellet:2013, kulis2013, moutafis, suarez2018}. There is also an extensive amount of work on using human feedback for learning metrics in specific tasks, e.g., image similarity and low-dimensional embeddings \citep{frome07, jamieson11, tamuz, vader, Wilber}. However, since these works are either tied to specific applications or specific metrics, they are only distantly related to ours.   

Our model is more closely related to the literature on trying to learn the similarity function from the fairness definition of \citet{Dwork2012}. This concept of fairness requires treating similar individuals similarly. Thus, it needs access to a function that returns a non-negative value for any pair of individuals, and this value corresponds to how similar the individuals are. Specifically, the smaller the value the more similar the elements that are compared. 

Even though the fairness definition of \citet{Dwork2012} is very elegant and intuitive, the main obstacle for adopting it in practice is the inability to easily compute or access the crucial similarity function. To our knowledge, the only papers that attempt to learn this similarity function using expert oracles like us, are \citet{ilvento2019, emp1} and \citet{emp2}. \citet{ilvento2019} addresses the scenario of learning a general metric function, and gives theoretical  PAC guarantees. \citet{emp1} give theoretical guarantees for learning similarity functions that are only of a specific Mahalanobis form. \citet{emp2} simply provide empirical results. The first difference between our model and these papers is that unlike us, they do not consider elements coming from multiple distributions. However, the most important difference is that these works only learn \textbf{metric}  functions. \textbf{In our case the collection of similarity values (from all $d_\ell$ and $\sigma_{\ell, \ell'}$) does not necessarily yield a complete metric space}; see the discussion in Section \ref{sec:def}. Hence, our problem focuses on learning more general functions.

Regarding the difficulty in computing similarity between members of different groups, we are only aware of a brief result by \citet{Dwork2012}. In particular, given a metric $d$ over the whole feature space, they mention that $d$ can only be trusted for comparisons between elements of the same group, and not for across-groups comparisons. In order to achieve the latter for groups $\ell$ and $\ell'$, they find a new similarity function $d'$ that approximates $d$, while minimizing the Earthmover distance between the distributions $\mathcal{D}_\ell, \mathcal{D}_{\ell'}$. This is completely different from our work, since here we assume the existence of across-groups similarity values, which we eventually want to learn. On the other hand, the approach of \citet{Dwork2012} can be seen as an optimization problem, where the across-groups similarity values need to be computed in a way that minimizes some objective. Also, unlike our model, this optimization approach has a serious limitation, and that is requiring $\mathcal{D}_\ell, \mathcal{D}_{\ell'}$ to be explicitly known (recall that here we only need samples from these distributions).  

Finally, since similarity as distance can be quite difficult to compute in practice, there has been a line of research that defines similarity using simpler, yet less expressive structures. Examples include similarity lists \cite{equ}, similarity graphs \cite{graph} and ordinal relationships \cite{jung19}.
\section{Theoretical Results}\label{sec:algs}

We begin the section by presenting a simple algorithm with PAC guarantees, whose error probability is shown to be almost optimal. Later on, we focus on optimizing the oracle queries, and show an improved algorithm for this objective. 

\subsection{A Simple Learning Algorithm}\label{sec:alg-naive}

Given any confidence and accuracy parameters $\delta, \epsilon \in (0,1)$ respectively, our approach is summarized as follows. At first, for every group $\ell$ we need a set $S_\ell$ of samples that are chosen i.i.d. according to $\mathcal{D}_\ell$, such that $|S_\ell| = \frac{1}{\delta} \log \frac{1}{\delta^2}$. Then, for every distinct $\ell$ and $\ell'$, and for all $x \in S_\ell$ and $y \in S_{\ell'}$, we ask the expert oracle for the true similarity value $\sigma_{\ell, \ell'}(x,y)$. The next observation follows trivially.

\begin{observation}\label{obs-1}
The algorithm uses $\frac{\gamma}{\delta} \log \frac{1}{\delta^2}$ samples, and $\frac{\gamma(\gamma-1)}{\delta^2} \log^2 \frac{1}{\delta^2}$ queries to the oracle.
\end{observation}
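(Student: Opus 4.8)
The plan is to read off both counts directly from the description of the algorithm, with no probabilistic ingredients needed; this is why the text flags the statement as trivial.

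First I would count the samples. For each of the $\gamma$ groups $\ell \in [\gamma]$ the algorithm draws one set $S_\ell$ of i.i.d.\ samples from $\mathcal{D}_\ell$ with $|S_\ell| = \frac{1}{\delta}\log\frac{1}{\delta^2}$, and these are separate collections across groups. Hence the total sample count is $\sum_{\ell \in [\gamma]} |S_\ell| = \gamma\cdot\frac{1}{\delta}\log\frac{1}{\delta^2}$, which is the first claimed quantity.

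Next I would count the oracle queries. The algorithm issues one query for each triple consisting of a distinct ordered pair of groups $(\ell,\ell')$ together with a pair $(x,y)\in S_\ell\times S_{\ell'}$. There are exactly $\gamma(\gamma-1)$ ordered pairs $(\ell,\ell')$ with $\ell\neq\ell'$, and for each such pair the number of $(x,y)$ choices is $|S_\ell|\cdot|S_{\ell'}| = \bigl(\frac{1}{\delta}\log\frac{1}{\delta^2}\bigr)^2 = \frac{1}{\delta^2}\log^2\frac{1}{\delta^2}$. Multiplying, the query total is $\frac{\gamma(\gamma-1)}{\delta^2}\log^2\frac{1}{\delta^2}$, matching the second claimed quantity.

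The only place where care is warranted — and thus the closest thing to an ``obstacle'' — is the ordered-versus-unordered accounting for group pairs: since the model does not stipulate $\sigma_{\ell,\ell'}(x,y)=\sigma_{\ell',\ell}(y,x)$ and we must output an estimate $f_{\ell,\ell'}$ for every ordered pair, I would charge $\gamma(\gamma-1)$ pairs rather than $\binom{\gamma}{2}$, which is consistent with the stated bound. Everything else is a direct tally, so no further argument is required.
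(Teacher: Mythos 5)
Your counting argument is correct and matches the paper, which states the observation follows trivially from the algorithm's description (each of the $\gamma$ groups contributes $|S_\ell| = \frac{1}{\delta}\log\frac{1}{\delta^2}$ samples, and queries are issued for every ordered pair of distinct groups and every pair of sampled elements). Your remark on charging $\gamma(\gamma-1)$ ordered pairs rather than $\binom{\gamma}{2}$ is consistent with the paper's accounting, so nothing further is needed.
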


Suppose now that we need to compare any $x \in \mathcal{D}_\ell$ and $y \in \mathcal{D}_{\ell'}$. Our high level idea is that the properties $\mathcal{M}_1$ and $\mathcal{M}_2$ of $\sigma_{\ell, \ell'}$ (see Section \ref{sec:def}), will actually allow us to use the closest element to $x$ in $S_\ell$ and the closest element to $y$ in $S_{\ell'}$ as proxies. Thus, let $\pi(x) = \argmin_{x' \in S_{\ell}}d_{\ell}(x,x')$ and $\pi(y) = \argmin_{y' \in S_{\ell'}}d_{\ell'}(y,y')$. The algorithm then sets $$f_{\ell, \ell'}(x,y) \coloneqq \sigma_{\ell, \ell'}(\pi(x), \pi(y))$$ where $\sigma_{\ell, \ell'}(\pi(x), \pi(y))$ is known from the earlier queries.

Before we proceed with the analysis of the algorithm, we need to recall some notation which was introduced in Section \ref{sec:res}. Consider any group $\ell$. An element $x \in \mathcal{D}_\ell$ with $\Pr_{x' \sim \mathcal{D}_\ell}[d_\ell(x,x') \leq \epsilon] < \delta$ is called an $(\epsilon, \delta)$-rare element of $\mathcal{D}_\ell$, and also $p_\ell(\epsilon, \delta) \coloneqq \Pr_{x \sim \mathcal{D}_\ell}[x \text{ is } (\epsilon, \delta)\text{-rare for } \mathcal{D}_\ell]$.

\begin{theorem}\label{thm:error}
For any given parameters $\epsilon, \delta \in (0,1)$, the simple algorithm produces similarity approximation functions $f_{\ell, \ell'}$ for every $\ell$ and $\ell'$, such that
\begin{align*}
\Pr[\text{Error}_{(\ell, \ell')}] = O(\delta + p_\ell(\epsilon, \delta) + p_{\ell'}(\epsilon, \delta))    
\end{align*}
where $\Pr[\text{Error}_{(\ell, \ell')}]$ is as in Theorem \ref{thm-naive}.
\end{theorem}

\begin{proof}
For two distinct groups $\ell$ and $\ell'$, consider what will happen when we are asked to compare some $x \in \mathcal{D}_\ell$ and $y \in \mathcal{D}_{\ell'}$. Properties $\mathcal{M}_1$ and $\mathcal{M}_2$ of $\sigma_{\ell, \ell'}$ imply
\begin{align*}
&Q \leq \sigma_{\ell, \ell'}(x,y) \leq P, \text{ where}\\
&P \coloneqq d_{\ell}(x, \pi(x)) + \sigma_{\ell, \ell'}(\pi(x), \pi(y)) + d_{\ell'}(y, \pi(y)) \\
&Q \coloneqq \sigma_{\ell, \ell'}(\pi(x), \pi(y)) - d_{\ell}(x, \pi(x)) -  d_{\ell'}(y, \pi(y))
\end{align*}

Note that when $d_{\ell}(x, \pi(x)) \leq 3\epsilon$ and $d_{\ell'}(y, \pi(y)) \leq 3\epsilon$, the above inequalities and the definition of $f_{\ell, \ell'}(x,y)$ yield $\big{|} f_{\ell, \ell'}(x,y) - \sigma_{\ell, \ell'}(x,y)\big{|} \leq 6\epsilon$. Thus, we just need upper bounds for $\mathcal{A} \coloneqq \Pr_{S_\ell, x \sim \mathcal{D}_{\ell}}[\forall x' \in S_\ell: d(x,x') > 3\epsilon]$ and $\mathcal{B} \coloneqq \Pr_{S_{\ell'}, y \sim \mathcal{D}_{\ell'}}[\forall y' \in S_\ell: d(y,y') > 3\epsilon]$, since the previous analysis and a union bound give $\Pr[\text{Error}_{(\ell, \ell')}] \leq \mathcal{A} + \mathcal{B}$.  In what follows we present an upper bound for $\mathcal{A}$. The same analysis gives an identical bound for $\mathcal{B}$.
    
Before we proceed to the rest of the proof, we have to provide an existential construction. For the sake of simplicity \emph{we will be using the term dense for elements of $\mathcal{D}_\ell$ that are not $(\epsilon, \delta)$-rare}. For every $x \in \mathcal{D}_{\ell}$ that is dense, we define $B_x \coloneqq \{x' \in \mathcal{D}_{\ell}: d_{\ell}(x,x') \leq \epsilon\}$. Observe that the definition of dense elements implies $\Pr_{x' \sim \mathcal{D}_{\ell}}[x' \in B_x] \geq \delta$ for every dense $x$. Next, consider the following process. We start with an empty set $\mathcal{R} = \{\}$, and we assume that all dense elements are unmarked. Then, we choose an arbitrary unmarked dense element $x$, and we place it in the set $\mathcal{R}$. Further, for every dense $x' \in \mathcal{D}_\ell$ that is unmarked and has $B_x \cap B_{x'} \neq \emptyset$, we mark $x'$ and set $\psi(x') = x$. Here the function $\psi$ maps dense elements to elements of $\mathcal{R}$. We continue this picking process until all dense elements have been marked. Since $B_z \cap B_{z'}= \emptyset$ for any two $z,z' \in \mathcal{R}$ and $\Pr_{x' \sim \mathcal{D}_{\ell}}[x' \in B_z] \geq \delta$ for $z \in \mathcal{R}$, we have $|\mathcal{R}| \leq 1/\delta$. Also, for every dense $x$ we have $d_{\ell}(x,\psi(x)) \leq 2\epsilon$ due to $B_x \cap B_{\psi(x)} \neq \emptyset$.

Now we are ready to upper bound $\mathcal{A}$ . 
\begin{align*}
    \mathcal{C} &\coloneqq \Pr_{S_\ell, x \sim \mathcal{D}_{\ell}}[\forall x' \in S_\ell: d(x,x') > 3\epsilon \land x \text{ is } (\epsilon, \delta)\text{-rare}] \\ &\leq \Pr_{x \sim \mathcal{D}_\ell}[x \text{ is } (\epsilon, \delta)\text{-rare} ] = p_\ell(\epsilon, \delta) \\
    \mathcal{D} &\coloneqq \Pr_{S_\ell, x \sim \mathcal{D}_{\ell}}[\forall x' \in S_\ell: d(x,x') > 3\epsilon \land x \text{ is dense}]  \\
    &\leq \Pr_{S_\ell}[\exists r \in \mathcal{R}: B_r \cap S_\ell = \emptyset] \\
    &\leq \sum_{r \in \mathcal{R}}\Pr_{S_\ell}[B_r \cap S_\ell = \emptyset] \\
    &\leq |\mathcal{R}|(1-\delta)^{|S_\ell|} \leq |\mathcal{R}| e^{-\delta|S_\ell|} \leq \delta
\end{align*}
The upper bound for $\mathcal{C}$ is trivial. We next explain the computations for $\mathcal{D}$. For the transition between the first and the second line we use a proof by contradiction. Hence, suppose that $S_\ell \cap B_{r} \neq \emptyset$ for every $r \in \mathcal{R}$, and let $i_r$ denote an arbitrary element of $S_\ell \cap B_{r}$. Then, for any dense element $x \in \mathcal{D}_\ell$ we have 
    $d_\ell(x, \pi(x)) \leq d_\ell(x, i_{\psi(x)}) \leq d_{\ell}(x, \psi(x)) + d_{\ell}(\psi(x), i_{\psi(x)}) \leq 2\epsilon + \epsilon = 3\epsilon$. 
Back to the computations for $\mathcal{D}$, to get the third line we simply used a union bound. To get from the third to the fourth line, we used the definition of $r \in \mathcal{R}$ as a dense element, which implies that the probability of sampling any element of $B_r$ in one try is at least $\delta$. The final bound is a result of numerical calculations using $|\mathcal{R}| \leq \frac{1}{\delta}$ and $|S_\ell| = \frac{1}{\delta} \log \frac{1}{\delta^2}$.

To conclude the proof, observe that $\mathcal{A} = \mathcal{C} + \mathcal{D}$, and using a similar reasoning as the one in upper-bounding $\mathcal{A}$ we also get $\mathcal{B} \leq p_{\ell'}(\epsilon, \delta) + \delta$.
\end{proof}

Observation \ref{obs-1} and Theorem \ref{thm:error} directly yield Theorem \ref{thm-naive}.

A potential criticism of the algorithm presented here, is that its error probabilities depend on $p_{\ell}(\epsilon, \delta)$. However, Theorem \ref{thm:dep} shows that such a dependence is unavoidable. 

\begin{proof}[\textbf{Proof of Theorem \ref{thm:dep}}]

Given any $\epsilon, \delta$, consider the following instance of the problem. We have two groups represented by the distributions $\mathcal{D}_1$ and $\mathcal{D}_2$. For the first group we have only one element belonging to it, and let that element be $x$. In other words, every time we draw an element from $\mathcal{D}_1$ that element turns out to be $x$, i.e., $\Pr_{x' \sim \mathcal{D}_1}[x' = x] = 1$. For the second group we have that every $y \in \mathcal{D}_2$ appears with probability $\frac{1}{|\mathcal{D}_2|}$, and $|\mathcal{D}_2|$ is a huge constant $c \gg 0$, with $\frac{1}{c} \ll \delta$. 

Now we define all similarity values. At first, the similarity function for $\mathcal{D}_1$ will trivially be $d_1(x,x) = 0$. For the second group, for every distinct $y,y' \in \mathcal{D}_2$ we define $d_2(y,y') = 1$. Obviously, for every $y \in \mathcal{D}_2$ we set $d_2(y,y) = 0$. Observe that $d_1$ and $d_2$ are metric functions for their respective groups. As for the across-groups similarities, each $\sigma(x,y)$ for $y \in \mathcal{D}_2$ is chosen independently, and it is drawn uniformly at random from $[0,1]$. Note that this choice of $\sigma$ satisfies the necessary metric-like properties $\mathcal{M}_1$ and $\mathcal{M}_2$ that were introduced in Section \ref{sec:def}.

Further, since $\epsilon, \delta \in (0,1)$, any $y \in \mathcal{D}_2$ will be $(\epsilon, \delta)$-rare:
\begin{align*}
  \Pr_{y' \sim \mathcal{D}_2}[d_2(y,y') \leq \epsilon] &= \Pr_{y' \sim \mathcal{D}_2}[y' = y] = \frac{1}{|\mathcal{D}_2|} < \delta
\end{align*}
The first equality is because the only element within distance $\epsilon$ from $y$ is $y$ itself. The last inequality is because $\frac{1}{|\mathcal{D}_2|} < \delta$. Therefore, since all elements are $(\epsilon, \delta)$-rare, we have $p_2(\epsilon, \delta) = 1$.

Consider now any learning algorithm that produces an estimate function $f$. For any $y \in \mathcal{D}_2$, let us try to analyze the probability of having $|f(x,y) - \sigma(x,y)| = \omega(\epsilon)$. At first, note that when $y \in S_2$, we can always get the exact value $f(x,y)$, since $x$ will always be in $S_1$. The probability of having $y \in S_2$ is $1-(1-1/|\mathcal{D}_2|)^N$, where $N$ is the number of used samples. Since we have control over $|\mathcal{D}_2|$ when constructing this instance, we can always set it to a large enough value that will give $1-(1-1/|\mathcal{D}_2|)^N = \epsilon$; note that this is possible because $1-(1-1/|\mathcal{D}_2|)^N$ is decreasing in $|\mathcal{D}_2|$ and $\lim_{|\mathcal{D}_2| \to \infty}\big{(}1-(1-1/|\mathcal{D}_2|)^N\big{)} = 0$. Hence,
\begin{align}
   \Pr_y[|f(x,y) - \sigma(x,y)| = \omega(\epsilon)] = (1-\epsilon)\Pr_y[|f(x,y) - \sigma(x,y)| = \omega(\epsilon) ~|~ y \notin  S_2] + \epsilon \label{ecai}
\end{align}

When $y$ will not be among the samples, the algorithm needs to learn $\sigma(x,y)$ via some other value $\sigma(x,y')$, for $y'$ being a sampled element of the second group. However, due to the construction of $\sigma$ the values $\sigma(x,y)$ and $\sigma(x,y')$ are independent. This means that knowledge of any $\sigma(x,y')$ (with $y \neq y'$) provides no information at all on $\sigma(x,y)$. Thus, the best any algorithm can do is guess $f(x,y)$ uniformly at random from $[0,1]$. This yields $\Pr[|f(x,y) - \sigma(x,y)| = \omega(\epsilon) ~|~ y \notin S_2] = 1 - \Pr[|f(x,y) - \sigma(x,y)| = O(\epsilon) ~|~ y \notin S_2] = 1- O(\epsilon) = p_2(\epsilon, \delta) - O(\epsilon)$. Combining this with (\ref{ecai}) gives the desired result. 
\end{proof}


\subsection{Optimizing the Number of Expert Queries}\label{sec:alg-intr}

Here we modify the earlier algorithm in a way that improves the number of queries used. The idea behind this improvement is the following. Given the sets of samples $S_\ell$, instead of asking the oracle for all possible similarity values $\sigma_{\ell, \ell'}(x,y)$ for every $\ell, \ell'$ and every $x \in S_\ell$ and $y \in S_{\ell'}$, we would rather choose a set $R_\ell \subseteq S_\ell$ of representative elements for each group $\ell$. Then, we would ask the oracle for the values $\sigma_{\ell, \ell'}(x,y)$ for every $\ell, \ell'$, but this time only for every $x \in R_\ell$ and $y \in R_{\ell'}$. The choice of the representatives is inspired by the $k$-center algorithm of \citet{Hochbaum1985}. Intuitively, the representatives $R_\ell$ of group $\ell$ will serve as similarity proxies for the elements of $S_\ell$, such that each $x \in S_\ell$ is assigned to a nearby $r_\ell(x) \in R_\ell$ via a mapping function $r_\ell: S_\ell \mapsto R_\ell$. Hence, if $d_\ell(x,r_\ell(x))$ is small enough, $x$ and $r_\ell(x)$ are highly similar, and thus $r_\ell(x)$ acts as a good approximation of $x$. The full details for the construction of $R_\ell$, $r_\ell$ are presented in Algorithm \ref{alg-1}.

Suppose now that we need to compare some $x \in \mathcal{D}_\ell$ and $y \in \mathcal{D}_{\ell'}$. Our approach will be almost identical to that of Section \ref{sec:alg-naive}. Once again, let $\pi(x) = \argmin_{x' \in S_{\ell}}d_{\ell}(x,x')$ and $\pi(y) = \argmin_{y' \in S_{\ell'}}d_{\ell'}(y,y')$. However, unlike the simple algorithm of Section \ref{sec:alg-naive} that directly uses $\pi(x)$ and $\pi(y)$, the more intricate algorithm here will rather use their proxies $r_\ell(\pi(x))$ and $r_{\ell'}(\pi(y))$. Our prediction will then be 
\begin{align*}
   f_{\ell, \ell'}(x,y) \coloneqq \sigma_{\ell, \ell'}\Big{(}r_\ell\big{(}\pi(x)\big{)}, r_{\ell'}\big{(}\pi(y)\big{)}\Big{)} 
\end{align*}
where $\sigma_{\ell, \ell'}(r_\ell(\pi(x)), r_{\ell'}(\pi(y)))$ is known from the earlier queries.

\begin{algorithm}[tb]
\caption{Training Phase}
\label{alg-1}
\textbf{Input}: Accuracy and confidence parameters $\epsilon, \delta$. For every group $\ell \in [\gamma]$, a set $S_\ell$ of i.i.d. samples chosen according to $\mathcal{D}_\ell$, such that $|S_\ell| =\frac{1}{\delta} \log \frac{1}{\delta^2}$. 
\begin{algorithmic}[1] 
\FOR {each $\ell \in [\gamma]$}
\STATE $H^{\ell}_x \gets \{x' \in S_\ell: d_{\ell}(x,x') \leq 8\epsilon\}$ for each $x \in S_\ell$.
\STATE $U \gets S_\ell$ and $R_\ell \gets \emptyset$.
\STATE $r_\ell(x) \gets x$ for each $x \in S_\ell$.
\WHILE {$U \neq \emptyset$}
\STATE Choose an arbitrary $x \in U$.
\STATE $R_\ell \gets R_\ell \cup \{x\}$.
\STATE $W_x \gets \{x' \in U: H^{\ell}_x \cap H^{\ell}_{x'} \neq \emptyset\}$.
\STATE $r_\ell(x') \gets x$ for every $x' \in W_x$.
\STATE $U \gets U \setminus W_x$.
\ENDWHILE
\ENDFOR
\STATE For every distinct $\ell$ and $\ell'$, and for every $x \in R_\ell$ and $y \in R_{\ell'}$, ask the oracle for the value $\sigma_{\ell, \ell'}(x,y)$ and store it.
\STATE For every $\ell$, return the set $R_\ell$ and the function $r_\ell$. 
\end{algorithmic}
\end{algorithm}

\begin{theorem}\label{thm:error-2}
For any given parameters $\epsilon, \delta \in (0,1)$, the new query optimization algorithm produces similarity approximation functions $f_{\ell, \ell'}$ for every $\ell$ and $\ell'$, such that
\begin{align*}
\Pr[\text{Error}_{(\ell, \ell')}] = O(\delta + p_\ell(\epsilon, \delta) + p_{\ell'}(\epsilon, \delta))    
\end{align*}
where $\Pr[\text{Error}_{(\ell, \ell')}]$ is as in Theorem \ref{thm-naive}.
\end{theorem}

\begin{proof}
For two distinct groups $\ell$ and $\ell'$, consider comparing some $x \in \mathcal{D}_\ell$ and $y \in \mathcal{D}_{\ell'}$. To begin with, let us assume that $d_{\ell}(x, \pi(x)) \leq 3\epsilon$ and $d_{\ell'}(y, \pi(y)) \leq 3\epsilon$. Furthermore, the execution of the algorithm implies $H^{\ell}_{\pi(x)} \cap H^{\ell}_{r_\ell(\pi(x))} \neq \emptyset$, and thus the triangle inequality and the definitions of the sets $H^{\ell}_{\pi(x)}, H^{\ell}_{r_\ell(\pi(x))}$ give $d_\ell(\pi(x), r_\ell(\pi(x))) \leq 16\epsilon$. Similarly $d_{\ell'}(\pi(y), r_{\ell'}(\pi(y))) \leq 16\epsilon$. Eventually:
\begin{align*}
    d_\ell(x, r_\ell(\pi(x))) &\leq d_{\ell}(x, \pi(x)) + d_{\ell}(\pi(x), r_\ell(\pi(x)))  \\ &\leq 19\epsilon \\
    d_{\ell'}(y, r_{\ell'}(\pi(y))) &\leq d_{\ell'}(y, \pi(y)) + d_{\ell'}(\pi(y), r_{\ell'}(\pi(y)))  \\ &\leq 19\epsilon 
\end{align*}
For notational convenience, let $A \coloneqq d_\ell(x, r_\ell(\pi(x)))$ and $B \coloneqq d_{\ell'}(y, r_{\ell'}(\pi(y)))$. Then, the metric properties $\mathcal{M}_1$ and $\mathcal{M}_2$ of $\sigma_{\ell, \ell'}$ and the definition of $f_{\ell, \ell'}(x,y)$ yield
\begin{align*}
    |\sigma_{\ell, \ell'}(x,y)-f_{\ell, \ell'}(x,y)| \leq  A + B \leq 38\epsilon 
\end{align*}

Overall, we proved that when $d_{\ell}(x, \pi(x)) \leq 3\epsilon$ and $d_{\ell'}(y, \pi(y)) \leq 3\epsilon$, we have $\big{|} f_{\ell, \ell'}(x,y) - \sigma_{\ell, \ell'}(x,y)\big{|} \leq 38\epsilon$. Finally, as shown in the proof of Theorem \ref{thm:error}, the probability of not having $d_{\ell}(x, \pi(x)) \leq 3\epsilon$ and $d_{\ell'}(y, \pi(y)) \leq 3\epsilon$, i.e., the error probability, is at most $2\delta + p_\ell(\epsilon, \delta) + p_{\ell'}(\epsilon, \delta) $.
\end{proof}

Since the number of samples used by the algorithm is easily seen to be $\frac{\gamma}{\delta} \log \frac{1}{\delta^2}$, the only thing left in order to prove Theorem \ref{thm-cluster} is analyzing the number of oracle queries. To that end, for every group $\ell \in [\gamma]$ with its sampled set $S_\ell$, we define the following Set Cover problem.

\begin{definition}\label{def-opt}
Let $\mathcal{H}^{\ell}_x \coloneqq \{x' \in S_\ell: d_{\ell}(x,x') \leq 4\epsilon\}$ for all $x \in S_\ell$. Find $C \subseteq S_\ell$ minimizing $|C|$, with $\bigcup_{c \in C}\mathcal{H}^{\ell}_c = S_\ell$. We use $OPT_\ell$ to denote the optimal value of this problem. Using standard terminology, we say $x \in S_\ell$ is \emph{covered} by $C$ if $x \in \bigcup_{c \in C}\mathcal{H}^{\ell}_c$, and $C$ is \emph{feasible} if it covers all $x \in S_\ell$.
\end{definition}

\begin{lemma}\label{lem-aux-1}
For every $\ell \in [\gamma]$ we have $|R_\ell| \leq OPT_\ell$.
\end{lemma}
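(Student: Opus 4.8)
The plan is to exploit the structure of the greedy \textbf{while} loop in Algorithm \ref{alg-1}, and then run a pigeonhole argument against an arbitrary feasible solution of the Set Cover instance from Definition \ref{def-opt}. The target inequality $|R_\ell| \leq OPT_\ell$ will follow by taking that feasible solution to be an optimal one.

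The first step is to record a separation property of the chosen representatives: for any two distinct $z, z' \in R_\ell$, we have $H^{\ell}_z \cap H^{\ell}_{z'} = \emptyset$. To see this, suppose without loss of generality that $z$ entered $R_\ell$ at an earlier iteration than $z'$. At the iteration where $z$ was picked, the algorithm removed from $U$ every $x'$ with $H^{\ell}_z \cap H^{\ell}_{x'} \neq \emptyset$ (these form $W_z$). Since $z'$ was picked from $U$ at a strictly later iteration, $z'$ was still in $U$ at that moment, hence $z' \notin W_z$, which by definition of $W_z$ means exactly $H^{\ell}_z \cap H^{\ell}_{z'} = \emptyset$.

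The second step converts this into a statement about the radius-$4\epsilon$ balls $\mathcal{H}^{\ell}_c$ of Definition \ref{def-opt}. Fix any $c \in S_\ell$ and suppose, for contradiction, that $\mathcal{H}^{\ell}_c$ contains two distinct representatives $z, z' \in R_\ell$, i.e. $d_\ell(c,z) \leq 4\epsilon$ and $d_\ell(c,z') \leq 4\epsilon$. Since $c \in S_\ell$ and $4\epsilon \leq 8\epsilon$, this gives $c \in H^{\ell}_z$ and $c \in H^{\ell}_{z'}$, so $c \in H^{\ell}_z \cap H^{\ell}_{z'}$, contradicting the separation property. Hence every set $\mathcal{H}^{\ell}_c$ with $c \in S_\ell$ contains \emph{at most one} element of $R_\ell$. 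Now let $C^\star \subseteq S_\ell$ be an optimal solution of the Set Cover instance, so $|C^\star| = OPT_\ell$ and $\bigcup_{c \in C^\star}\mathcal{H}^{\ell}_c = S_\ell$. Because $R_\ell \subseteq S_\ell$, the family $\{\mathcal{H}^{\ell}_c\}_{c \in C^\star}$ covers all of $R_\ell$, but each member covers at most one element of $R_\ell$, so $OPT_\ell = |C^\star| \geq |R_\ell|$, as claimed.

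I do not anticipate a real obstacle here; the only subtlety worth care is the bookkeeping on radii: one must check that membership of $z,z'$ in a common $4\epsilon$-ball forces a common point of $H^{\ell}_z$ and $H^{\ell}_{z'}$ (the $8\epsilon$-balls), which is precisely why Algorithm \ref{alg-1} is run with radius $8\epsilon$ while the Set Cover instance uses $4\epsilon$ — and the clean point is that the covering center $c$ itself is the witness in that intersection, so no extra triangle-inequality chaining is even needed.
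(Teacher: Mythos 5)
Your proof is correct and follows essentially the same route as the paper's: both arguments show that each set $\mathcal{H}^{\ell}_c$ of an optimal cover can contain at most one representative from $R_\ell$ and then conclude by a covering/pigeonhole step. The only cosmetic difference is that the paper derives $d_\ell(x,x') > 8\epsilon$ between representatives and applies the triangle inequality inside $\mathcal{H}^{\ell}_c$, whereas you use the disjointness of the $8\epsilon$-balls $H^{\ell}_z, H^{\ell}_{z'}$ directly with the cover center $c$ as the witness, a marginally more direct bookkeeping of the same idea.
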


\begin{proof}
Consider a group $\ell$, and let $C^*$ be its optimal solution for the problem of Definition \ref{def-opt}. We first claim that each $\mathcal{H}^{\ell}_c$ with $c \in C^*$ contains at most one element of $R_\ell$. This is due to the following. For any $c \in C^*$, we have $d_\ell(z,z') \leq d_\ell(z,c) + d_\ell(c,z') \leq 8\epsilon$ for all $z,z' \in \mathcal{H}^{\ell}_c$. In addition, the construction of $R_\ell$ trivially implies $d_\ell(x,x') > 8\epsilon$ for all $x,x' \in R_\ell$. Thus, no two elements of $R_\ell$ can be in the same $\mathcal{H}^{\ell}_c$ with $c \in C^*$. Finally, since Definition \ref{def-opt} requires all $x \in R_\ell$ to be covered, we have $|R_\ell| \leq |C^*| = OPT_\ell$.
\end{proof}

\begin{lemma}\label{lem-aux-2}
Let $N = \frac{1}{\delta} \log \frac{1}{\delta^2}$. For each group $\ell \in [\gamma]$ we have $OPT_\ell \leq N$ with probability $1$, and $\mathbb{E}[OPT_\ell] \leq \frac{1}{\delta} + p_\ell(\epsilon, \delta) N$. The randomness here is over the samples $S_\ell$.
\end{lemma}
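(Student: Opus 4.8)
The plan is to prove the two bounds separately. The bound $OPT_\ell \leq N$ is immediate: the set $C = S_\ell$ is feasible for the Set Cover instance of Definition \ref{def-opt}, since every $x \in S_\ell$ lies in $\mathcal{H}^{\ell}_x$ (because $d_\ell(x,x) = 0 \leq 4\epsilon$), so $OPT_\ell \leq |S_\ell| = N$ with probability $1$; in particular the instance is always feasible, so $OPT_\ell$ is well defined.

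For the expectation bound, I would exhibit, for each realization of $S_\ell$, a feasible cover whose size is at most the number of $(\epsilon,\delta)$-rare elements in $S_\ell$ plus $1/\delta$, and then take expectations. To build this cover I reuse the existential construction from the proof of Theorem \ref{thm:error}: let $\mathcal{R}$ be the set of dense (i.e. non-$(\epsilon,\delta)$-rare) elements of $\mathcal{D}_\ell$ produced by the greedy disjoint-$B$-ball procedure, so that $|\mathcal{R}| \leq 1/\delta$ and every dense $x \in \mathcal{D}_\ell$ is assigned some $\psi(x) \in \mathcal{R}$ with $d_\ell(x, \psi(x)) \leq 2\epsilon$. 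Then I define $C$ to contain (i) every $(\epsilon,\delta)$-rare element of $S_\ell$, and (ii) for each $r \in \mathcal{R}$ to which at least one dense sample in $S_\ell$ is mapped by $\psi$, one arbitrarily chosen such dense sample $j_r \in S_\ell$.

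The feasibility check is a short triangle-inequality argument. A rare $x \in S_\ell$ is covered by itself via (i). A dense $x \in S_\ell$ has $\psi(x) = r$ for some $r \in \mathcal{R}$, hence $j_r$ exists by (ii); since both $x$ and $j_r$ are dense with $\psi(\cdot) = r$, we get $d_\ell(x, j_r) \leq d_\ell(x, r) + d_\ell(r, j_r) \leq 2\epsilon + 2\epsilon = 4\epsilon$, i.e. $x \in \mathcal{H}^{\ell}_{j_r}$. Thus $C$ is feasible and $OPT_\ell \leq |C| \leq |\{x \in S_\ell : x \text{ is } (\epsilon,\delta)\text{-rare}\}| + |\mathcal{R}| \leq |\{x \in S_\ell : x \text{ is } (\epsilon,\delta)\text{-rare}\}| + 1/\delta$. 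Taking expectation over the i.i.d.\ draw of $S_\ell$ and using linearity of expectation together with the fact that each of the $N$ samples is $(\epsilon,\delta)$-rare with probability exactly $p_\ell(\epsilon,\delta)$ yields $\mathbb{E}[OPT_\ell] \leq N p_\ell(\epsilon,\delta) + 1/\delta$, as claimed.

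The only delicate point — and the step I would watch most carefully — is lining up the constants: the Set Cover balls $\mathcal{H}^{\ell}$ have radius $4\epsilon$, whereas the $\mathcal{R}$-construction only guarantees each dense element is within $2\epsilon$ of its representative in $\mathcal{R}$, and elements of $\mathcal{R}$ need not be sampled. Hence the cover must be centered at a \emph{sampled} element $j_r$ lying within $2\epsilon$ of $r$, and the two $2\epsilon$ hops ($x \to r \to j_r$) must be budgeted exactly against the $4\epsilon$ radius. Everything else is routine bookkeeping.
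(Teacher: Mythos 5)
Your proposal is correct and follows essentially the same route as the paper: both bound $OPT_\ell$ by exhibiting a feasible cover consisting of all $(\epsilon,\delta)$-rare samples plus, for each representative $r$ in the set $\mathcal{R}$ from the proof of Theorem \ref{thm:error}, one sampled dense element mapped to $r$ by $\psi$, with the same $2\epsilon + 2\epsilon \leq 4\epsilon$ triangle-inequality budgeting against the $\mathcal{H}^{\ell}$ radius and the same linearity-of-expectation step giving $\mathbb{E}[OPT_\ell] \leq \frac{1}{\delta} + p_\ell(\epsilon,\delta)N$. No gaps.
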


\begin{proof}
Consider a group $\ell$. Initially, through Definition \ref{def-opt} it is clear that $OPT_\ell \leq |S_\ell| = N$. For the second statement of the lemma we need to analyze $OPT_\ell$ in a more clever way.

Recall the classification of elements $x \in \mathcal{D}_\ell$ that was first introduced in the proof of Theorem \ref{thm:error}. According to this, an element can either be $(\epsilon, \delta)$-rare, or dense. Now we will construct a solution $C_\ell$ to the problem of Definition \ref{def-opt} as follows. 

At first, let $S_{\ell, r}$ be the set of $(\epsilon, \delta)$-rare elements of $S_\ell$. We will include all of $S_{\ell, r}$ to $C_\ell$, so that all $(\epsilon, \delta)$-rare elements of $S_\ell$ are covered by $C_\ell$. Further:
\begin{align}
    \mathbb{E}\big{[}|S_{\ell,r}|\big{]} = p_\ell(\epsilon, \delta) N \label{eq-1}
\end{align}

Moving on, recall the construction shown in the proof of Theorem \ref{thm:error}. According to that, there exists a set $\mathcal{R}$ of at most $\frac{1}{\delta}$ dense elements from $\mathcal{D}_\ell$, and a function $\psi$ that maps every dense element $x \in \mathcal{D}_\ell$ to an element $\psi(x) \in \mathcal{R}$, such that $d_\ell(x,\psi(x)) \leq 2\epsilon$. Let us now define for each $x \in \mathcal{R}$ a set $G_x \coloneqq \{x' \in \mathcal{D}_\ell: x' \text{ is dense and } \psi(x') = x\}$, and note that $d_\ell(z,z') \leq d_\ell(z, x) + d_\ell(z', x)  \leq 4\epsilon$ for all $z, z' \in G_x$. Thus, for each $x \in \mathcal{R}$ with $G_x \cap S_\ell \neq \emptyset$, we place in $C_\ell$ an arbitrary $y \in G_x \cap S_\ell$, and that $y$ gets all of $G_x \cap S_\ell$ covered. Finally, since the sets $G_x$ induce a partition of the dense elements of $\mathcal{D}_\ell$, $C_\ell$ covers all dense elements of $S_\ell$. 

Equation (\ref{eq-1}) and $|\mathcal{R}| \leq \frac{1}{\delta}$ yield $\mathbb{E}\big{[}|C_\ell|\big{]} \leq \frac{1}{\delta} + p_\ell(\epsilon, \delta)N$. Also, since $C_\ell$ is shown to be a feasible solution for problem of Definition \ref{def-opt}, we get 
\begin{align*}
    OPT_\ell \leq |C_\ell| \implies \mathbb{E}[OPT_\ell] \leq \frac{1}{\delta} + p_\ell(\epsilon, \delta) N &\qedhere
\end{align*}
\end{proof}

The proof of Theorem \ref{thm-cluster} is concluded as follows. All pairwise queries for the elements in the sets $R_\ell$ are easily seen to be
\begin{align}
    \sum_{\ell} \Big{(}|R_\ell| \sum_{\ell' \neq \ell}|R_{\ell'}| \Big{)} \leq  \sum_{\ell} \Big{(}OPT_\ell \sum_{\ell' \neq \ell}OPT_{\ell'} \Big{)} \label{eq-2}
\end{align}
where the inequality follows from Lemma \ref{lem-aux-1}. Finally, combining equation (\ref{eq-2}) and Lemma \ref{lem-aux-2} gives the desired bound in Theorem \ref{thm-cluster}.

\begin{remark}
The factor $8$ in the definition of $H^{\ell}_x$ at line 2 of Algorithm \ref{alg-1} is arbitrary. Actually, any factor $\rho = O(1)$ would yield the same asymptotic guarantees, with any changes in accuracy and queries being only of an $O(1)$ order of magnitude. Specifically, the smaller $\rho$ is, the better the achieved accuracy and the more queries we are using.
\end{remark}

Finally, we are interested in lower bounds on the queries required for learning. To that end, we present Theorem \ref{thm:quer-bnd}, which shows that any algorithm with accuracy $O(\epsilon)$ and confidence $O(\delta)$ needs $\Omega(\gamma^2 / \delta^2)$ queries.

\begin{proof}[\textbf{Proof of Theorem \ref{thm:quer-bnd}}]
We are given accuracy and confidence parameters $\epsilon, \delta \in (0,1)$ respectively. For the sake of simplifying the exposition in the proof, let us assume that $\frac{1}{\delta}$ is an integer; all later arguments can be generalized in order to handle the case of $1/\delta \notin \mathbb{N}$. 

We construct the following problem instance. We have two groups represented by the distributions $\mathcal{D}_1$ and $\mathcal{D}_2$. In addition, for both of these groups we assume that the support of the corresponding distribution contains $\frac{1}{\delta}$ elements, and $\Pr_{x' \sim \mathcal{D}_1}[x = x'] = \delta$ for every $x \in \mathcal{D}_1$ as well as $\Pr_{y' \sim \mathcal{D}_2}[y = y'] = \delta$ for every $y \in \mathcal{D}_2$.

For every $x, x' \in \mathcal{D}_1$ let $d_1(x,x') = 1$, and $d_1(x,x) = 0$ for every $x \in \mathcal{D}_1$. Similarly, for every $y, y' \in \mathcal{D}_2$ we set $d_2(y,y') = 1$, and for every $y \in \mathcal{D}_2$ we set $d_2(y,y) = 0$. The functions $d_1$ and $d_2$ are clearly metrics. As for the across-groups similarity values, each $\sigma(x,y)$ for $x \in \mathcal{D}_1$ and $y \in \mathcal{D}_2$ is chosen independently, and it is drawn uniformly at random from $[0,1]$. Note that this choice of $\sigma$ satisfies the necessary properties $\mathcal{M}_1, \mathcal{M}_2$ introduced in Section \ref{sec:def}.

In this proof we are also focusing on a more special learning model. In particular, we assume that the distributions $\mathcal{D}_1$ and $\mathcal{D}_2$ are known. Hence, there is no need for sampling. The only randomness here is over the random arrivals $x \sim \mathcal{D}_1$ and $y \sim \mathcal{D}_2$, where $x$ and $y$ are the elements that need to be compared. Obviously, the similarity function $\sigma$ would still remain unknown to any learner. Finally, the queries required for learning in this model cannot be more than the queries required in the original model, and this is because this model is a special case of the original.

Consider now an algorithm with the error guarantees mentioned in the Theorem statement, and focus on a fixed pair $(x,y)$ with $x \in \mathcal{D}_1$ and $y \in \mathcal{D}_2$. If the algorithm has queried the oracle for $(x,y)$, it knows $\sigma(x,y)$ with absolute certainty. Let us study what happens when the algorithm has not queried the oracle for $(x,y)$. In this case, because the values $\sigma(x', y')$ with $x' \in \mathcal{D}_1$ and $y' \in \mathcal{D}_2$ are independent, no query the algorithm has performed can provide any information for $\sigma(x,y)$. Thus, the best the algorithm can do is uniformly at random guess a value in $[0,1]$, and return that as the estimate for $\sigma(x,y)$. If $\bar{\mathcal Q}_{x,y}$ denotes the event where no query is performed for $(x,y)$, then
\begin{align*}
    \mathcal{P} &\coloneqq \Pr \big{[} | f_{\ell, \ell'}(x,y) - \sigma_{\ell, \ell'}(x,y)| = \omega(\epsilon) ~|~ \bar{\mathcal Q}_{x,y} \big{]} \\
    &=1 - \Pr \big{[} | f_{\ell, \ell'}(x,y) - \sigma_{\ell, \ell'}(x,y)| = O(\epsilon) ~|~ \bar{\mathcal Q}_{x,y} \big{]} \\
    &=1 - O(\epsilon) = \Omega(1)
\end{align*}
where the randomness comes only from the algorithm.

For the sake of contradiction, suppose the algorithm uses $q = o(1/\delta^2)$ queries. Since each $(x,y)$ is equally likely to appear for a comparison, the overall error probability is
\begin{align*}
    \Bigg{(}\frac{1/\delta^2 - q}{1/ \delta^2}\Bigg{)} \cdot \mathcal{P} = \Omega(1) \cdot \Omega(1) = \Omega(1)
\end{align*}
Contradiction; the error probability was assumed to be $O(\delta)$. 
\end{proof}

\begin{corollary}
When for every $\ell \in [\gamma]$ the value $p_\ell(\epsilon, \delta)$ is arbitrarily close to $0$, the algorithm presented in this section achieves an expected number of oracle queries that is asymptotically optimal.
\end{corollary}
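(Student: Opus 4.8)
The plan is to sandwich the expected number of oracle queries between matching $\Theta(\gamma^2/\delta^2)$ bounds: the upper bound from Theorem~\ref{thm-cluster} taken in the regime $p_\ell(\epsilon,\delta)\to 0$, and the lower bound from Theorem~\ref{thm:quer-bnd}, after verifying that the algorithm of this section satisfies its hypothesis.

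For the upper bound, I would start from the query count of Theorem~\ref{thm-cluster}, namely $\sum_{\ell}\big(Q_\ell\sum_{\ell'\neq\ell}Q_{\ell'}\big)$ with $Q_\ell=|R_\ell|\leq N$ and $\mathbb{E}[Q_\ell]\leq \frac1\delta+p_\ell(\epsilon,\delta)N$, where $N=\frac1\delta\log\frac1{\delta^2}$. The key structural fact to invoke is that each $R_\ell$ is produced by Algorithm~\ref{alg-1} from the sample set $S_\ell$ alone, and the sets $S_1,\dots,S_\gamma$ are drawn independently; hence $Q_1,\dots,Q_\gamma$ are mutually independent, so $\mathbb{E}[Q_\ell Q_{\ell'}]=\mathbb{E}[Q_\ell]\,\mathbb{E}[Q_{\ell'}]$ for $\ell\neq\ell'$. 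By linearity of expectation, the expected number of queries is then at most
\[
\sum_{\ell}\mathbb{E}[Q_\ell]\sum_{\ell'\neq\ell}\mathbb{E}[Q_{\ell'}]
\;\leq\;\sum_{\ell}\Big(\tfrac1\delta+p_\ell(\epsilon,\delta)N\Big)\sum_{\ell'\neq\ell}\Big(\tfrac1\delta+p_{\ell'}(\epsilon,\delta)N\Big).
\]
Fixing $\epsilon$ and $\delta$ makes $N$ a constant, so as every $p_\ell(\epsilon,\delta)\to 0$ each correction term $p_\ell(\epsilon,\delta)N\to 0$ and the right-hand side tends to $\frac{\gamma(\gamma-1)}{\delta^2}=O(\gamma^2/\delta^2)$; concretely, once all $p_\ell(\epsilon,\delta)$ are small enough the algorithm uses $(1+o(1))\gamma(\gamma-1)/\delta^2$ queries in expectation.

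For the lower bound, I would note that Theorem~\ref{thm:error-2} gives $\Pr[\mathrm{Error}_{(\ell,\ell')}]=O(\delta+p_\ell(\epsilon,\delta)+p_{\ell'}(\epsilon,\delta))$, which is $O(\delta)$ precisely when the $p_\ell(\epsilon,\delta)$ are negligible. Thus the algorithm belongs to the class of learners to which Theorem~\ref{thm:quer-bnd} applies, and that theorem forces $\Omega(\gamma^2/\delta^2)$ queries — the same pairwise counting argument also lower-bounds the \emph{expected} query count of a randomized learner. Putting the two bounds together shows the expected number of queries is $\Theta(\gamma^2/\delta^2)$, i.e.\ asymptotically optimal.

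The only delicate point is the factorization $\mathbb{E}\big[\sum_\ell Q_\ell\sum_{\ell'\neq\ell}Q_{\ell'}\big]=\sum_\ell\mathbb{E}[Q_\ell]\sum_{\ell'\neq\ell}\mathbb{E}[Q_{\ell'}]$, which relies on the cross-group independence of the $Q_\ell$; the rest is substitution into Theorems~\ref{thm-cluster}, \ref{thm:error-2} and~\ref{thm:quer-bnd}, together with the observation that holding $\epsilon,\delta$ fixed while $p_\ell\to 0$ keeps $N$ bounded so the $p_\ell N$ terms disappear.
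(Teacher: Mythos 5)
Your proposal is correct and follows essentially the same route as the paper's own (very terse) proof: bound the expected queries via the per-group representative counts from Lemma~\ref{lem-aux-2} (equivalently the $\mathbb{E}[Q_\ell]$ bound of Theorem~\ref{thm-cluster}), let the $p_\ell(\epsilon,\delta)N$ terms vanish to get $\gamma(\gamma-1)/\delta^2$ in expectation, and match it against the $\Omega(\gamma^2/\delta^2)$ lower bound of Theorem~\ref{thm:quer-bnd}. Your explicit handling of the cross-group independence needed to factor $\mathbb{E}[Q_\ell Q_{\ell'}]$, and your check via Theorem~\ref{thm:error-2} that the algorithm actually satisfies the error hypothesis of Theorem~\ref{thm:quer-bnd}, are details the paper leaves implicit, but they do not change the argument.
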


\begin{proof}
When every $p_\ell(\epsilon, \delta)$ is very close to $0$, Lemma \ref{lem-aux-2} gives $\mathbb{E}[OPT_\ell] \leq \frac{1}{\delta}$. Thus, by inequality (\ref{eq-2}) the expected queries are $\frac{\gamma(\gamma-1)}{\delta^2}$. Theorem \ref{thm:quer-bnd} concludes the proof.
\end{proof}
\section{Experimental Evaluation}\label{sec:exp}

We implemented all algorithms in Python 3.10.6 and ran our experiments on a personal laptop with Intel(R) Core(TM) i7-7500U CPU @ 2.70GHz   2.90 GHz and 16.0 GB memory.

\textbf{Algorithms: }We implemented the simple algorithm from Section \ref{sec:alg-naive}, and the more intricate algorithm of Section \ref{sec:alg-intr}. We refer to the former as \textsc{Naive}, and to the latter as \textsc{Cluster}. For the training phase of \textsc{Cluster}, we set the dilation factor at line 2 of Algorithm~\ref{alg-1} to $2$ instead of $8$. The reason for this, is that a minimal experimental investigation revealed that this choice leads to a good balance between accuracy guarantees and oracle queries.

As explained in Section \ref{sec:rel-wrk}, neither the existing similarity learning algorithms \citep{ilvento2019, emp1, emp2} nor the Earthmover minimization approach of \citet{Dwork2012} address the problem of finding similarity values for heterogeneous data. Furthermore, if the across-groups functions $\sigma$ are not metric, then no approach from the metric learning literature can be utilized. Hence, as baselines for our experiments we used three general regression models; an MLP, a Random Forest regressor (RF) and an XGBoost regressor (XGB). The MLP uses four hidden layers of 32 relu activation nodes, and both RF and XGB use 200 estimators.

\textbf{Number of demographic groups: }All our experiments are performed for two groups, i.e., $\gamma = 2$. The following reasons justify this decision. At first, this case captures the essence of our algorithmic results; the $\gamma > 2$ case can be viewed as running the algorithm for $\gamma=2$ multiple times, one for each pair of groups. Secondly, as the theoretical guarantees suggest, the achieved confidence and accuracy of our algorithms are completely independent of $\gamma$.

\textbf{Similarity functions: }In all our experiments the feature space is $\mathbb{R}^d$, where $d \in \mathbb{N}$ is case-specific. In line with our motivation which assumes that the intra-group similarity functions are simple, we define $d_1$ and $d_2$ to be the Euclidean distance. Specifically, for $\ell \in \{1,2\}$, the similarity between any $x,y \in \mathcal{D}_\ell$ is given by $d_\ell(x,y) = \sqrt{\sum_{i \in [d]}(x_i - y_i)^2}$.

For the across-groups similarities, we aim for a difficult to learn \emph{non-metric} function; we purposefully chose a non-trivial function in order to challenge both our algorithms and the baselines. Namely, for any $x \in \mathcal{D}_1$ and $y \in \mathcal{D}_2$, we assume
\begin{align}
    \sigma(x,y) = \sqrt[\leftroot{-1}\uproot{2}\scriptstyle 3]{\sum_{i \in [d]}|\alpha_i \cdot x_i - \beta_i \cdot y_i + \theta_i|^3} \label{func}
\end{align}
where the vectors $\alpha, \beta, \theta \in \mathbb{R}^d$ are basically the hidden parameters to be learned (of course non of the learners we use has any insight on the specific structure of $\sigma$).

The function $\sigma$ implicitly adopts a paradigm of feature importance \citep{NINOADAN}. Specifically, when $x$ and $y$ are to be compared, their features are scaled accordingly by the expert using the parameters $\alpha, \beta$, while some offsetting via $\theta$ might also be necessary. For example, in the college admissions use-case, certain features may have to be properly adjusted (increase a feature for a non-privileged student and decrease it for the privileged one). In the end, the similarity is calculated in an $\ell_3$-like manner. To see why $\sigma$ is not a metric and why it satisfies the necessary properties $\mathcal{M}_1$ and $\mathcal{M}_2$ from Section \ref{sec:def}, refer to Theorem \ref{sigma-metric} in Appendix \ref{sec:apdx-1}.

In each experiment we choose all $\alpha_i, \beta_i, \theta_i$ independently. The values $\alpha_i, \beta_i$ are chosen uniformly at random from $[0,1]$, while the $\theta_i$ are chosen uniformly at random from $[-0.01, 0.01]$. Obviously, the algorithms do not have access to the vectors $\alpha,\beta, \theta$, which are only used to simulate the oracle and compare our predictions with the corresponding true values.

\textbf{Datasets: }We used 2 datasets from the UCI ML Repository \citep{Dua:2019}, namely Adult-48,842 points \citep{kohavi} and Credit Card Default-30,000 points \citep{yeh}, and the publicly available Give Me Some Credit dataset (150,000 points) \citep{GiveMeSomeCredit}. We chose these datasets because this type of data is frequently used in applications of issuing credit scores, and in such cases fairness considerations are of utmost importance. For Adult, where categorical features are not encoded as integers, we assigned each category to an integer in $\{1, \#\text{categories}\}$, and this integer is used in place of the category in the feature vector \citep{NEURIPS2021_32e54441}. Finally, in every dataset we standardized all features through a MinMax re-scaller.

\textbf{Choosing the two groups:} For Credit Card Default and Adult, we defined groups based on marital status. Specifically, the first group corresponds to points that are married individuals, and the second corresponds to points that are not married (singles, divorced and widowed are merged together). In Give Me Some Credit, we partition individuals into two groups based on whether or not they have dependents.

\begin{figure*}[tb]
\begin{subfigure}{0.33\textwidth}
  \centering
  \includegraphics[width=1\linewidth]{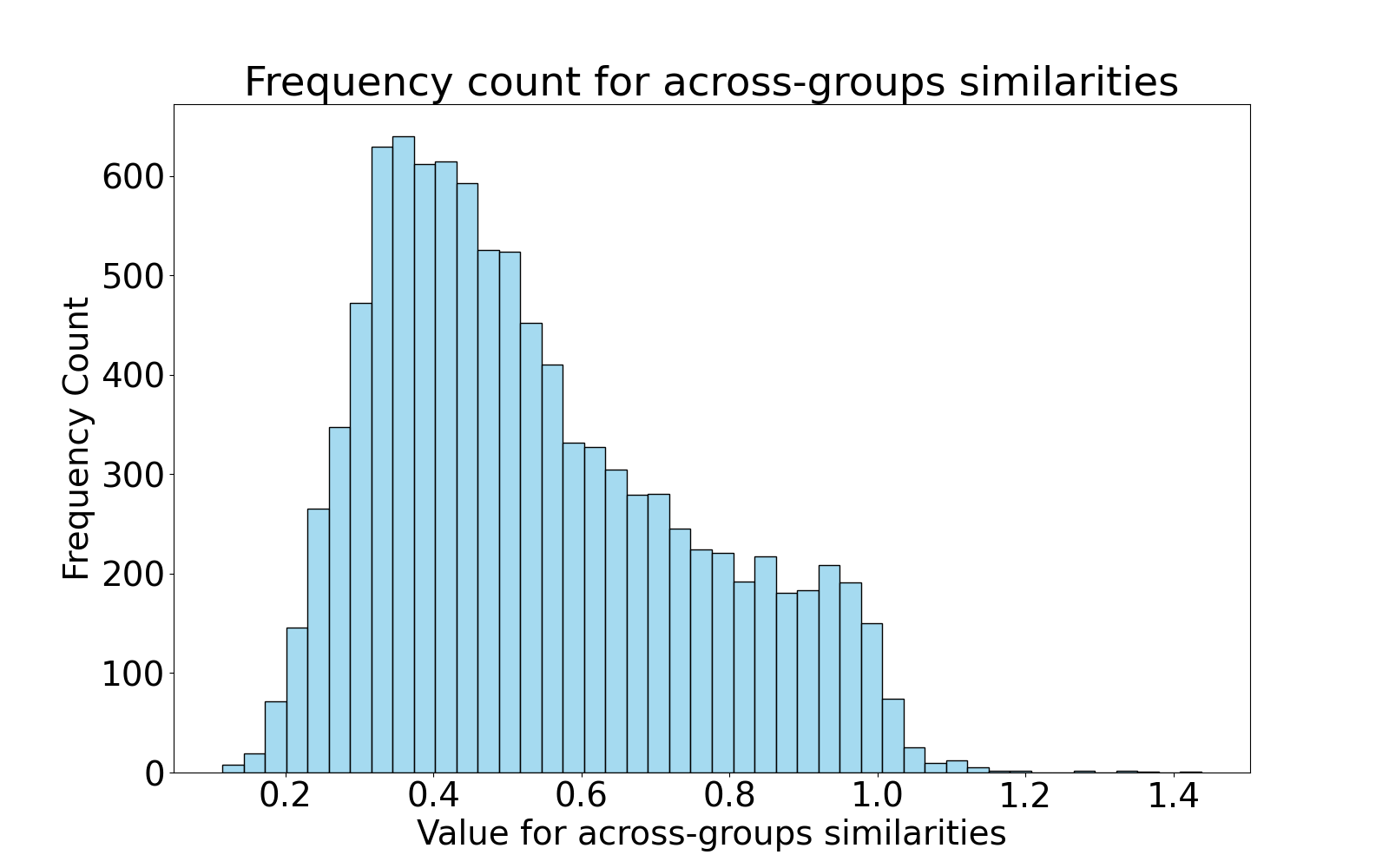}
  \caption{Credit Card Default}
\end{subfigure}%
\begin{subfigure}{0.33\textwidth}
  \centering
  \includegraphics[width=1\linewidth]{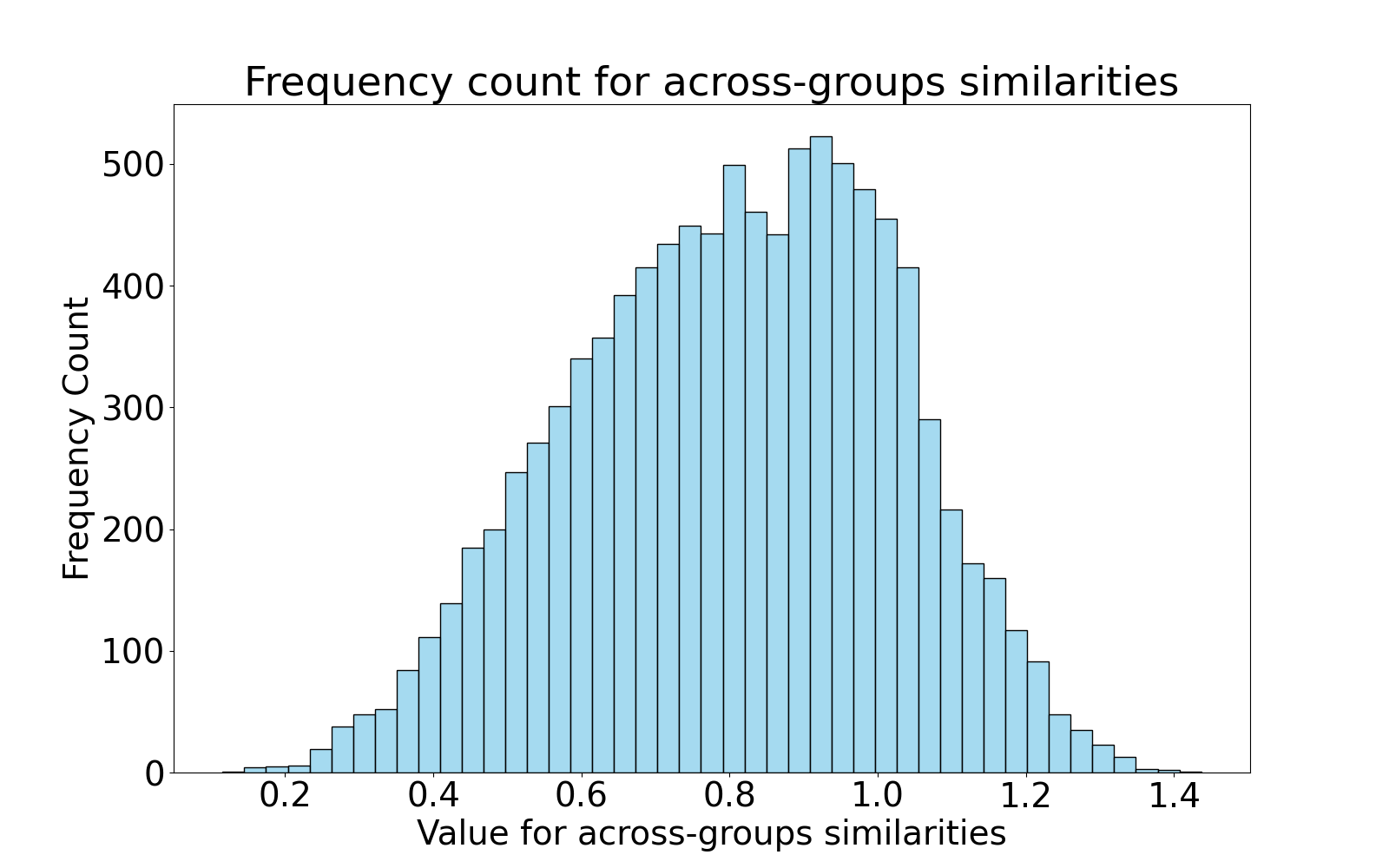}
  \caption{Adult}
\end{subfigure}
\begin{subfigure}{0.33\textwidth}
  \centering
  \includegraphics[width=1\linewidth]{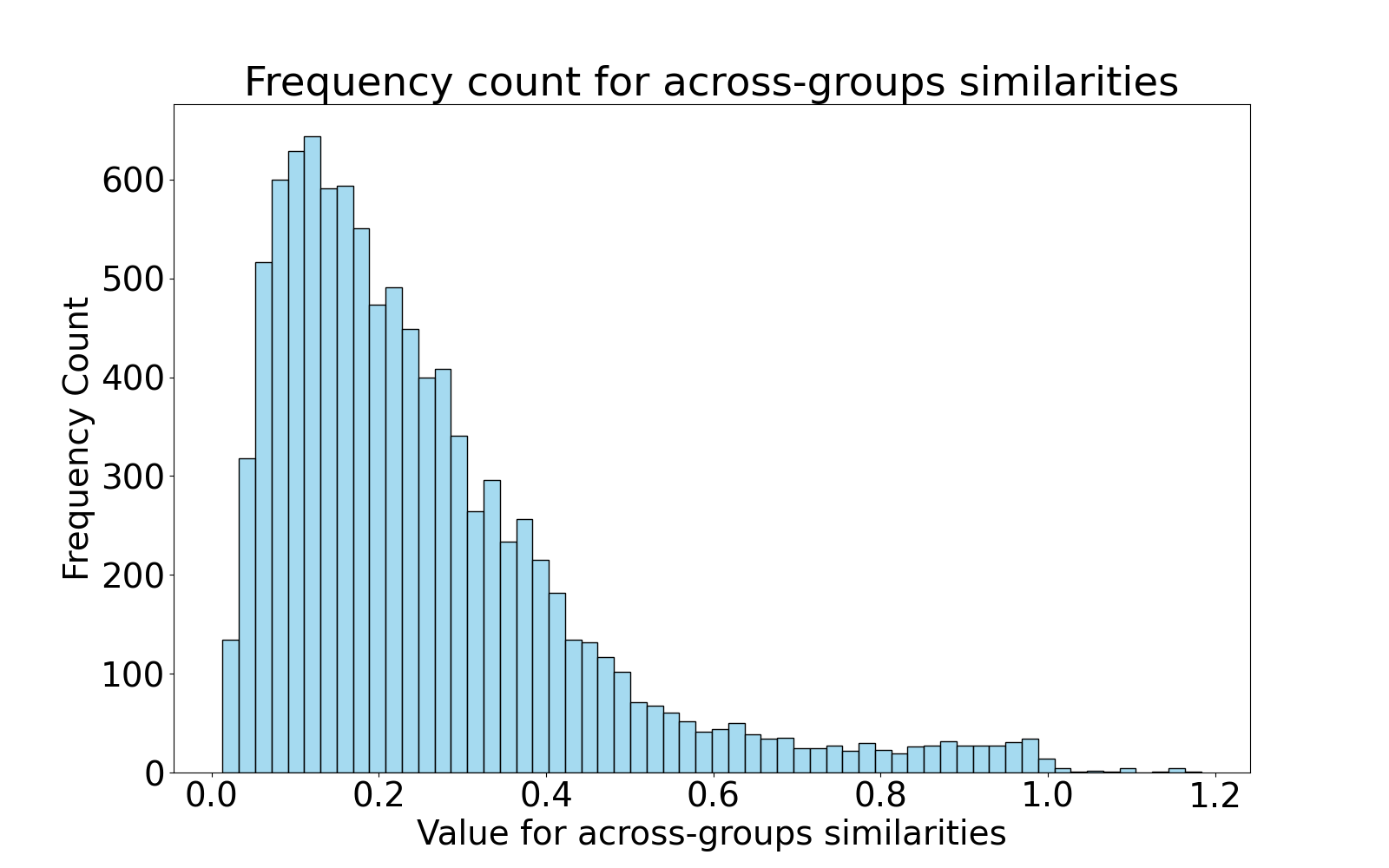}
  \caption{Give Me Some Credit}
\end{subfigure}
\caption{Frequency counts for $\sigma(x,y)$}
\label{fig-1}
\end{figure*}

\textbf{Choosing the accuracy parameter $\epsilon$:} To use a meaningful value for $\epsilon$, we need to know the order of magnitude of $\sigma(x,y)$. Thus, we calculated the value of $\sigma(x,y)$ over $10,000$ trials, where the randomness was of multiple factors, i.e., the random choices for the $\alpha, \beta, \theta$, and the sampling of $x$ and $y$. Figure \ref{fig-1} shows histograms for the empirical frequency of $\sigma(x,y)$ over the $10,000$ runs. In addition, in those trials the minimum value of $\sigma(x,y)$ observed was \textbf{1)} $0.1149$ for Credit Card Default, \textbf{2)} $0.1155$ for Adult, and \textbf{3)} $0.0132$ for Give Me Some Credit. Thus, aiming for an accuracy parameter that is at least an order of magnitude smaller than the value to be learned, we choose $\epsilon = 0.01$ for Credit Card Default and Adult, and $\epsilon = 0.001$ for Give Me Some Credit.

\textbf{Confidence $\delta$ and number of samples:} All our experiments are performed with $\delta = 0.001$. In \textsc{Naive} and \textsc{Cluster}, we follow our theoretical results and sample $N = \frac{1}{\delta} \log\frac{1}{\delta^2}$ points from each group. Choosing the training samples for the baselines is a bit more tricky. For these regression tasks a training point is a tuple $(x,y,\sigma(x,y))$. In other words, these tasks do not distinguish between queries and samples. To be as fair as possible when comparing against our algorithms, we provide the baselines with $N + Q$ training points of the form $(x,y,\sigma(x,y))$, where $Q$ is the maximum number of queries used in any of our algorithms. 

\begin{table}[tb]
\centering
\resizebox{\textwidth}{!}{
\begin{tabular}{|c|c|c|c|c|}
\hline
\textbf{Algorithm} & \textbf{Average Relative Error \%} & \textbf{SD of Relative Error \%} & \textbf{Average (Absolute Error)/$\epsilon$} & \textbf{SD of (Absolute Error)/$\epsilon$}\\
\hline
\textsc{Naive} & 1.558 & 3.410 & 0.636 & 1.633 \\
\hline
\textsc{Cluster} & 1.593 & 3.391 & 0.646 & 1.626 \\
\hline
\textsc{MLP} & 3.671 & 4.275 & 1.465 & 1.599 \\
\hline
\textsc{RF} & 3.237 & 4.813 & 1.306 & 2.318 \\
\hline
\textsc{XGB} & 3.121 & 4.606 & 1.273 & 1.869 \\
\hline
\end{tabular}
}
\caption{Error Statistics for Credit Card Default}
\label{tab-1}
\end{table}
    
\begin{table}[tb]
\centering
\resizebox{\textwidth}{!}{
\begin{tabular}{|c|c|c|c|c|}
\hline
\textbf{Algorithm} & \textbf{Average Relative Error \%} & \textbf{SD of Relative Error \%} & \textbf{Average (Absolute Error)/$\epsilon$} & \textbf{SD of (Absolute Error)/$\epsilon$}\\
\hline
\textsc{Naive} & 1.319 & 3.381 & 0.847 & 2.382 \\
\hline
\textsc{Cluster} & 1.321 & 3.383 & 0.849 & 2.383 \\
\hline
\textsc{MLP} & 4.119 & 6.092 & 2.306 & 1.977 \\
\hline
\textsc{RF} & 3.519 & 6.657 & 2.020 & 2.987 \\
\hline
\textsc{XGB} & 2.248 & 5.351 & 1.215 & 1.655 \\
\hline
\end{tabular}
}
\caption{Error Statistics for Adult}
\label{tab-2}
\end{table}

\begin{table}[tb]
\centering
\resizebox{\textwidth}{!}{
\begin{tabular}{|c|c|c|c|c|}
\hline
\textbf{Algorithm} & \textbf{Average Relative Error \%} & \textbf{SD of Relative Error \%} & \textbf{Average (Absolute Error)/$\epsilon$} & \textbf{SD of (Absolute Error)/$\epsilon$}\\
\hline
\textsc{Naive} & 1.630 & 4.106 & 2.644 & 7.710 \\
\hline
\textsc{Cluster} & 1.645 & 4.118 & 2.648 & 7.665 \\
\hline
\textsc{MLP} & 5.819 & 9.073 & 7.588 & 8.750 \\
\hline
\textsc{RF} & 5.692 & 12.404 & 7.258 & 34.508 \\
\hline
\textsc{XGB} & 5.614 & 13.355 & 6.516 & 7.754 \\
\hline
\end{tabular}
}
\caption{Error Statistics for Give Me Some Credit}
\label{tab-3}
\end{table}

\textbf{Testing:} We test our algorithms over $1,000$ trials, where each trial consists of independently sampling two elements $x,y$, one for each group, and then inputting those to the predictors.  We are interested in two metrics. The first is the relative error percentage; if $p$ is the prediction for elements $x,y$ and $t$ is their true similarity value, the relative error percentage is $100 \cdot |p-t|/t $. The second metric we consider is the absolute error divided by $\epsilon$; if $p$ is the prediction for two elements and $t$ is their true similarity value, this metric is $|p-t|/\epsilon$. We are interested in the latter metric because our theoretical guarantees are of the form $|f(x,y) - \sigma(x,y)| = O(\epsilon)$. 

Tables \ref{tab-1}-\ref{tab-3} show the average relative and absolute value error, together with the standard deviation for these metrics across all $1,000$ runs. It is clear that our algorithms dominate the baselines since they exhibit smaller errors with smaller standard deviations. In addition, \textsc{Naive} appears to have a tiny edge over \textsc{Cluster}, and this is something to be expected (the queries of \textsc{Cluster} are a subset of the queries of \textsc{Naive}). However, as shown in Table \ref{tab-4}, \textsc{Cluster} leads to a significant decrease in oracle queries compared to \textsc{Naive}, thus justifying its superiority.

\begin{table}[tb]
\centering
\begin{tabular}{|c|c|c|c|c|}
\hline
\textbf{Credit Card Default} & \textbf{Adult} & \textbf{Give Me Some Credit}\\
\hline
81.01\% & 80.40\% & 84.87\%  \\
\hline
\end{tabular}
\caption{Percent of decrease in queries when using \textsc{Cluster} instead of \textsc{Naive}}
\label{tab-4}
\end{table}

To further demonstrate the statistical behavior of the errors, we present Figures \ref{fig-2}-\ref{fig-4}. Each of these depicts the empirical CDF of an error metric through a bar plot. Specifically, the height of each bar corresponds to the fraction of test instances whose error is at most the value in the x-axis directly underneath the bar. Once again, we see that our algorithms outperform the baselines, since their corresponding bars across all datasets are higher than those of the baselines.

\begin{figure*}[tb]
\begin{subfigure}{0.5\textwidth}
  \centering
  \includegraphics[width=1\linewidth]{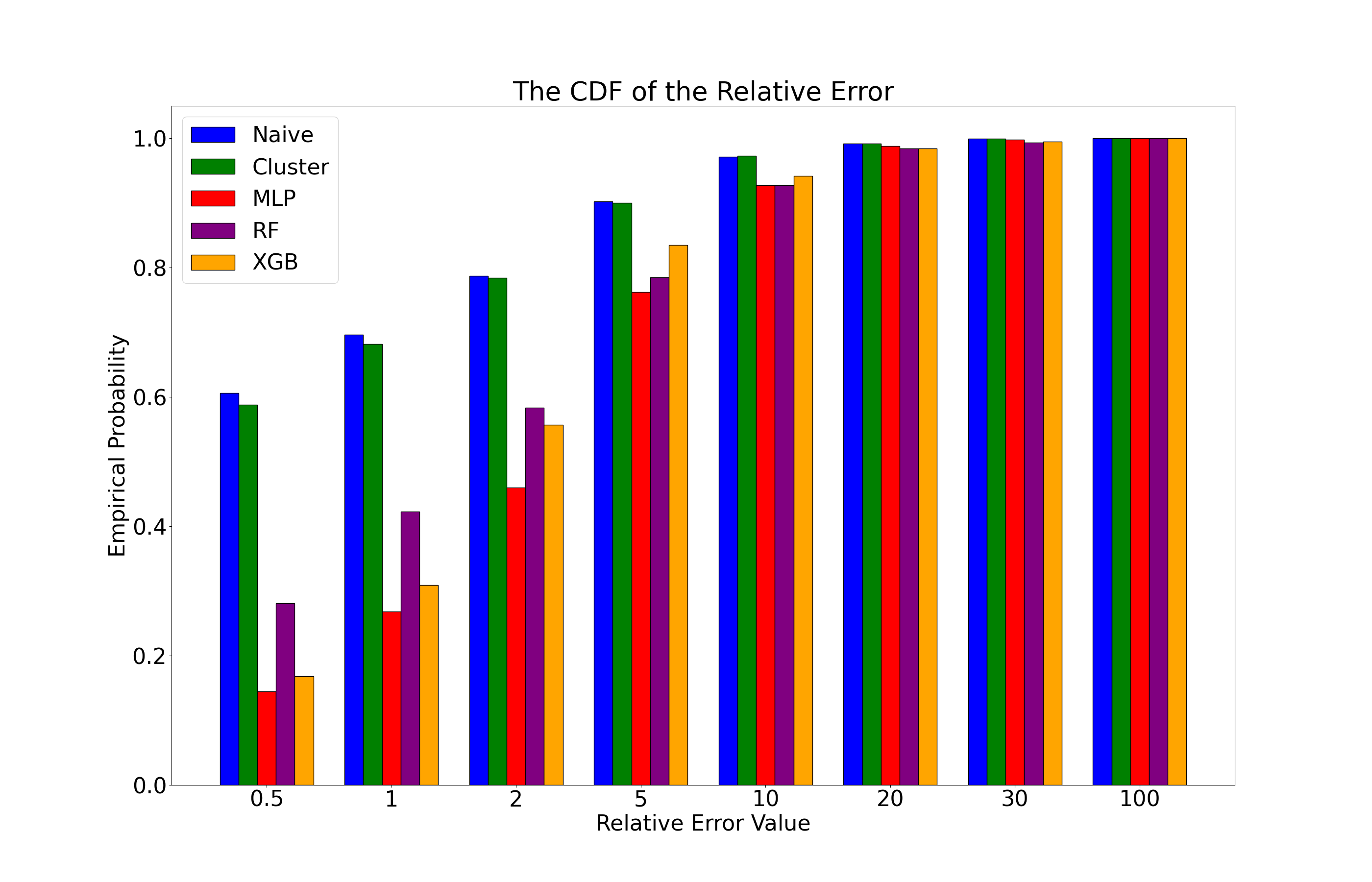}
  \caption{Relative Error Percentage}
\end{subfigure}%
\begin{subfigure}{0.5\textwidth}
  \centering
  \includegraphics[width=1\linewidth]{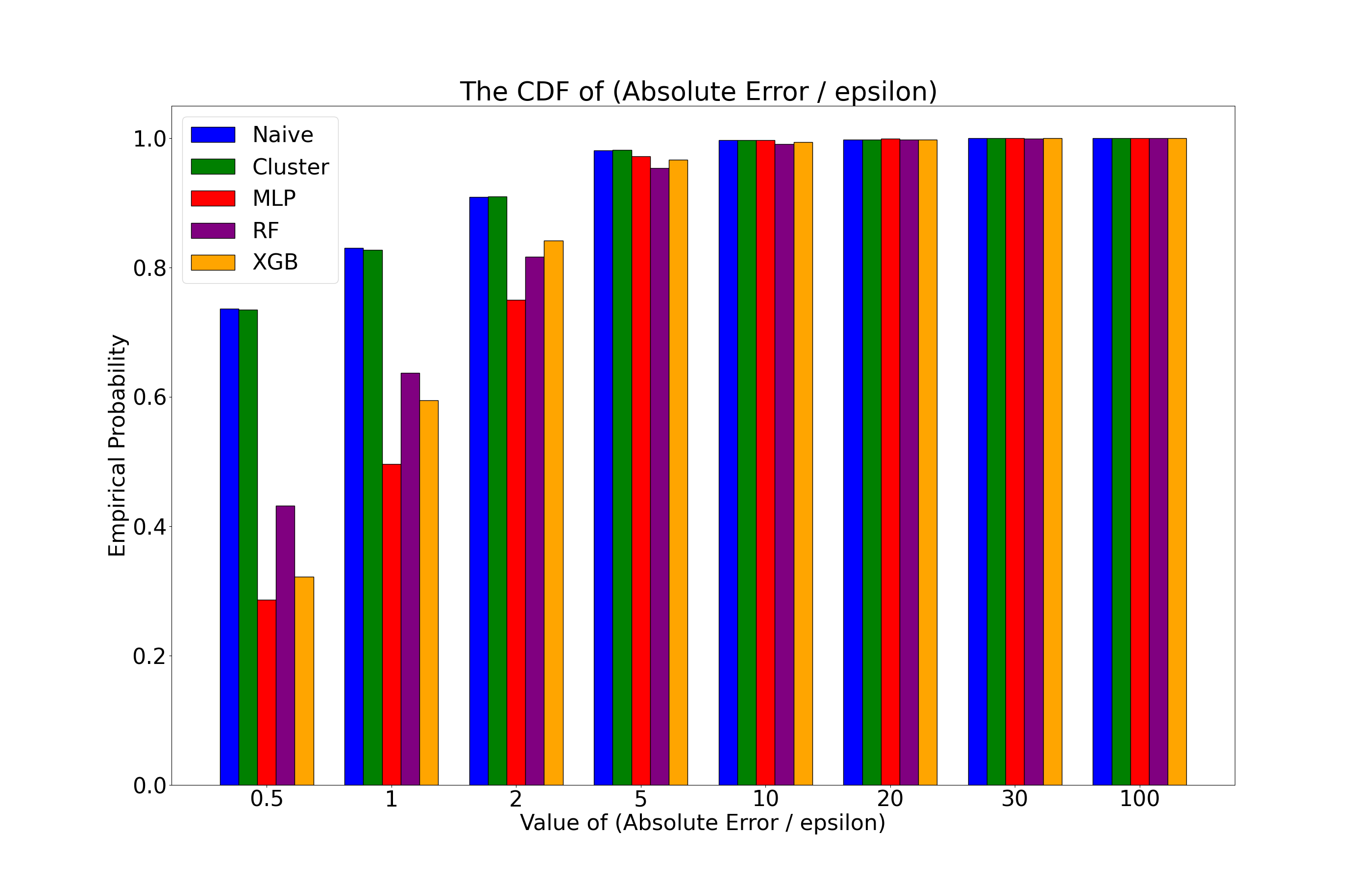}
  \caption{(Absolute Error) / $\epsilon$}
\end{subfigure}
\caption{Empirical CDF for Credit Card Default}
\label{fig-2}
\end{figure*}

\begin{figure*}[tb]
\begin{subfigure}{0.5\textwidth}
  \centering
  \includegraphics[width=1\linewidth]{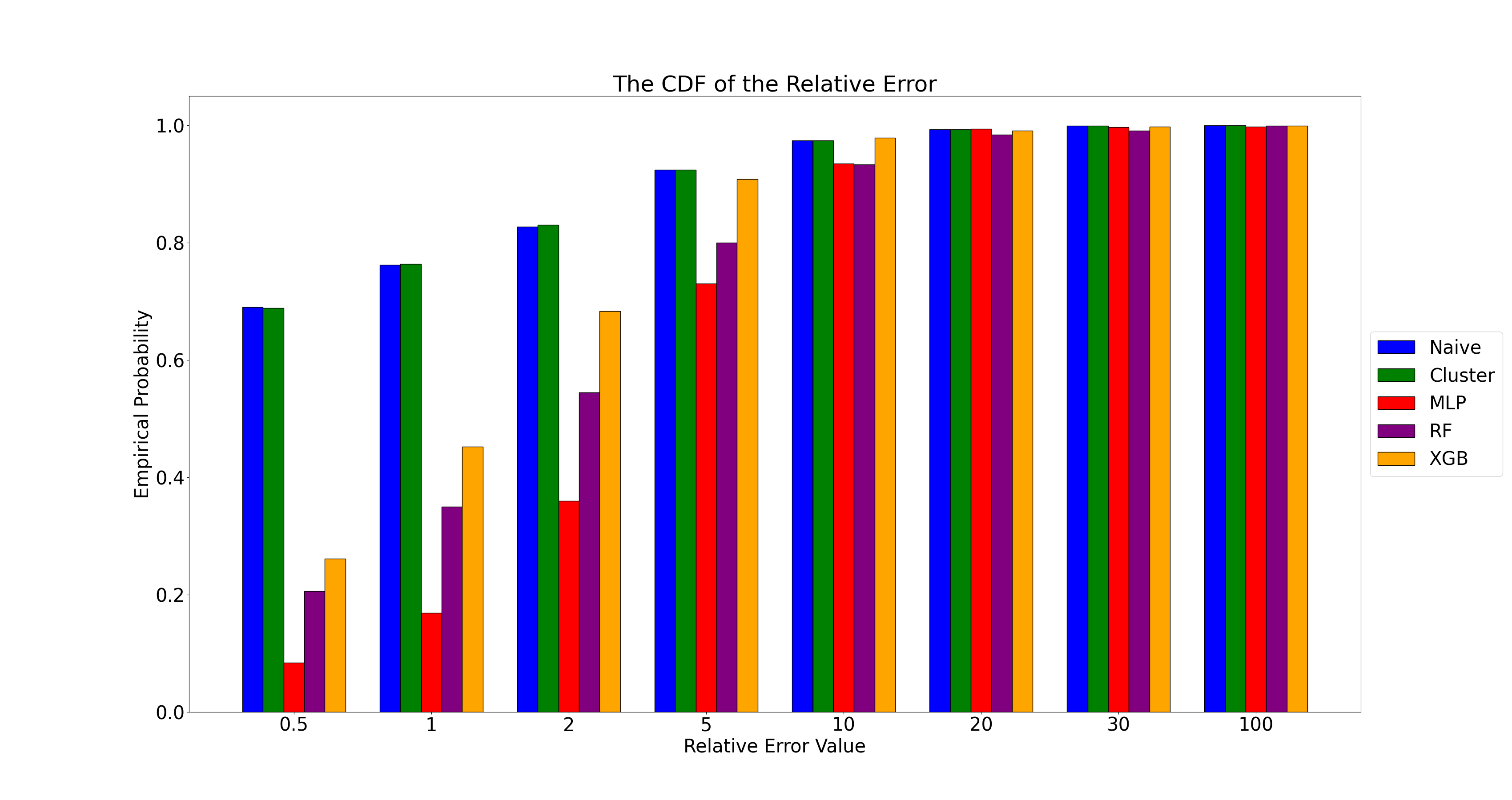}
  \caption{Relative Error Percentage}
\end{subfigure}%
\begin{subfigure}{0.5\textwidth}
  \centering
  \includegraphics[width=1\linewidth]{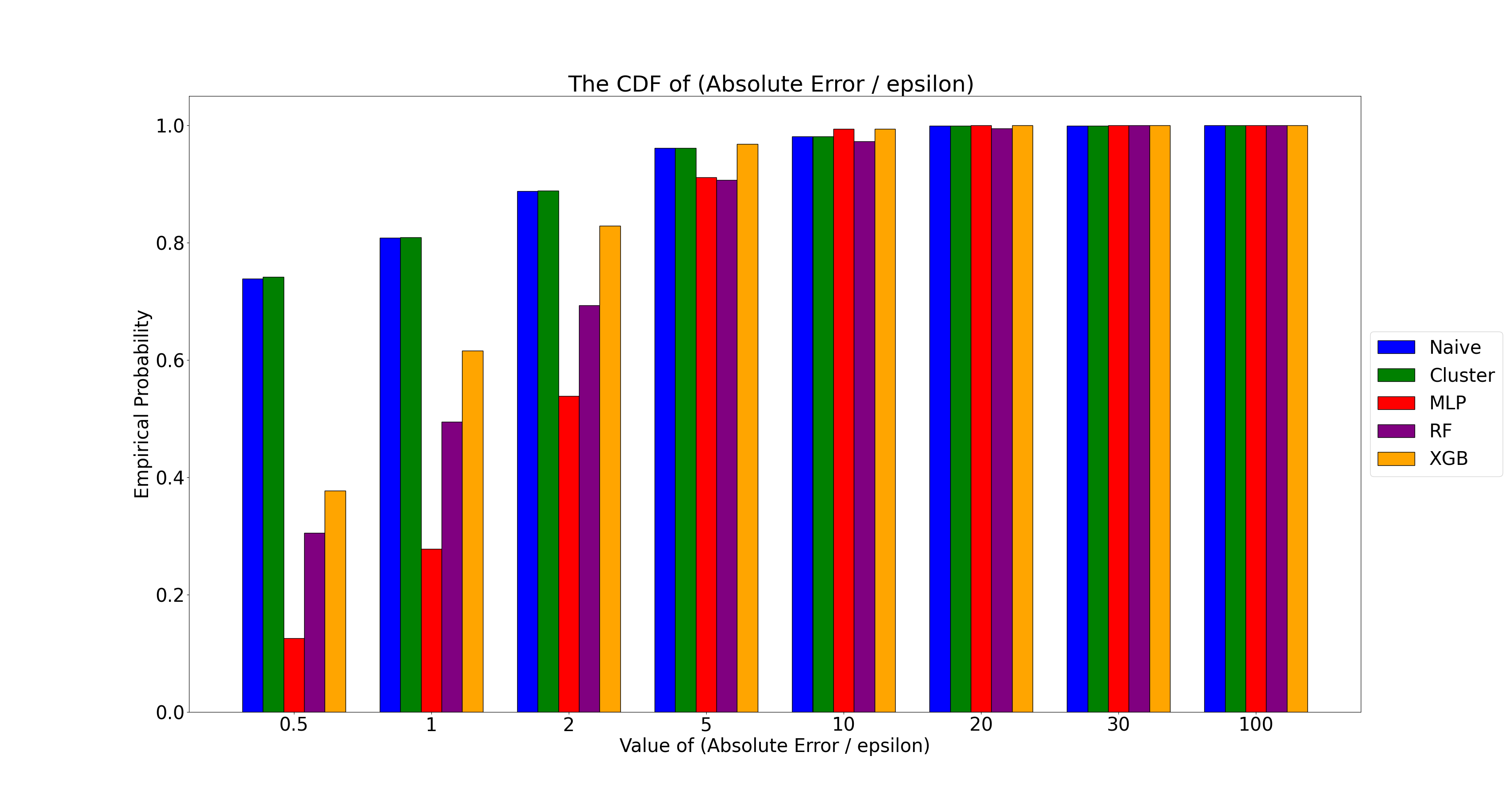}
  \caption{(Absolute Error) / $\epsilon$}
\end{subfigure}
\caption{Empirical CDF for Adult}
\label{fig-3}
\end{figure*}

\begin{figure*}[tb]
\begin{subfigure}{0.5\textwidth}
  \centering
  \includegraphics[width=1\linewidth]{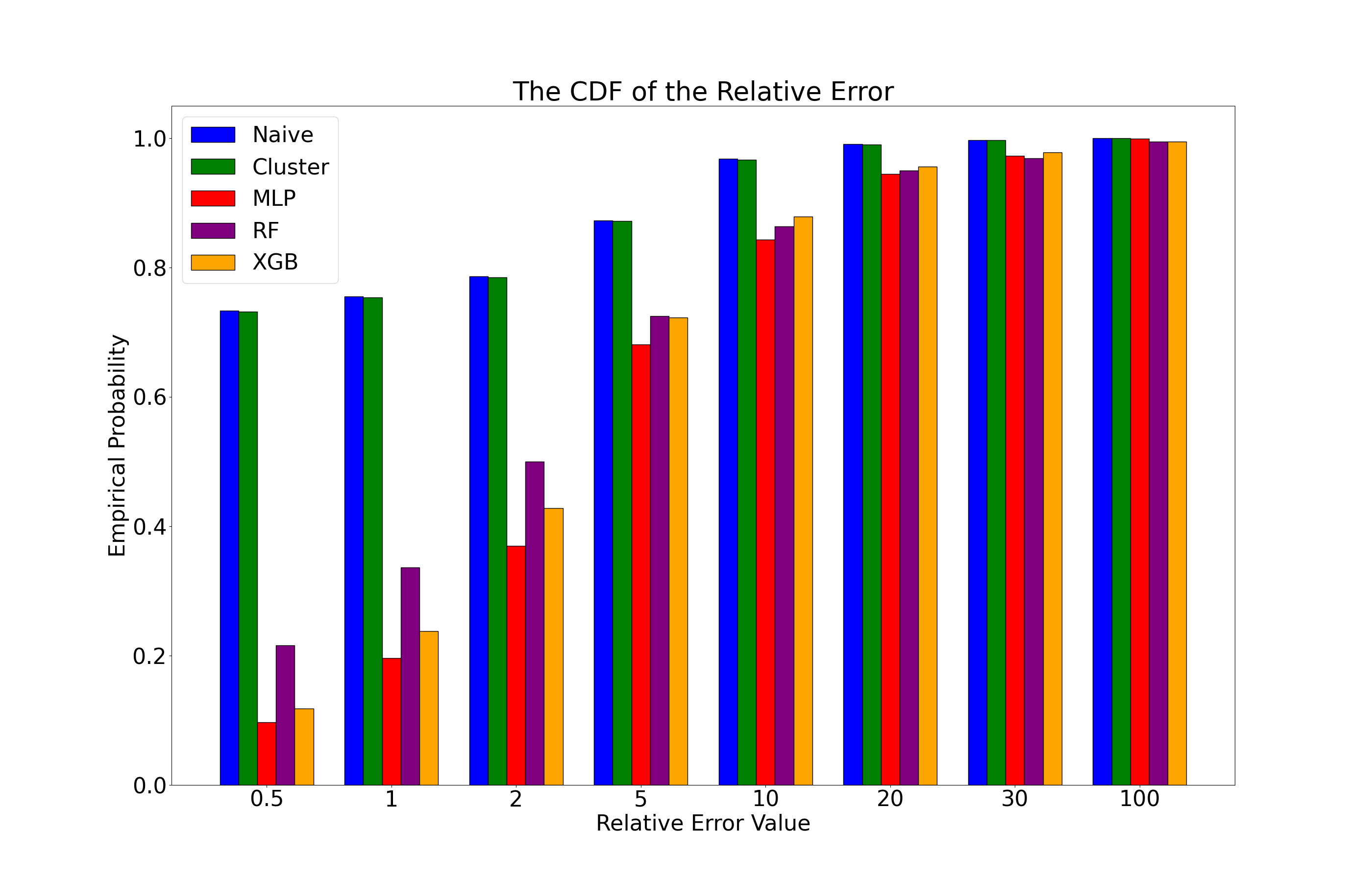}
  \caption{Relative Error Percentage}
\end{subfigure}%
\begin{subfigure}{0.5\textwidth}
  \centering
  \includegraphics[width=1\linewidth]{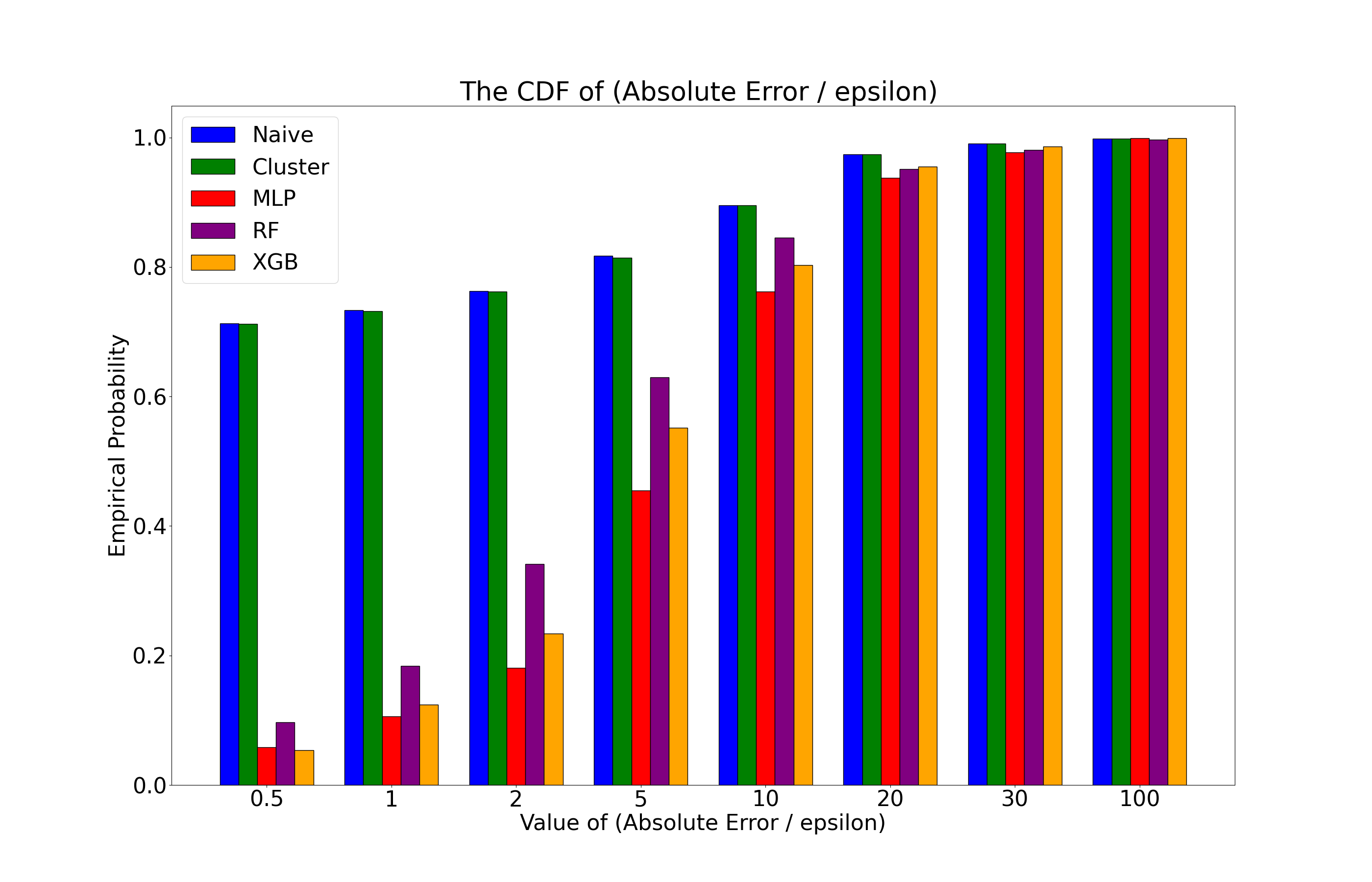}
  \caption{(Absolute Error) / $\epsilon$}
\end{subfigure}
\caption{Empirical CDF for Give Me Some Credit}
\label{fig-4}
\end{figure*}

\section{Conclusion and Future Work}
In this paper we addressed the task of learning (not necessarily metric) similarity functions for comparisons between heterogeneous data. Such functions play a vital role in wide range of applications, with perhaps the most prominent use-case being considerations of Individual Fairness. Our primary contribution involves an efficient sampling algorithm with provable theoretical guarantees, which learns the aforementioned similarity values using an almost optimal amount of expert advice.

Regarding future work, the most intriguing open question is studying the problem under different oracle models, e.g., oracles that only return ordinal relationships and not exact similarity.
\section*{Disclaimer}

This paper was prepared for informational purposes in part
by the Artificial Intelligence Research group of JPMorgan
Chase \& Co. and its affiliates (“JP Morgan”), and is not a
product of the Research Department of JP Morgan. JP Morgan makes no representation and warranty whatsoever and
disclaims all liability, for the completeness, accuracy or reliability of the information contained herein. This document
is not intended as investment research or investment advice,
or a recommendation, offer or solicitation for the purchase
or sale of any security, financial instrument, financial product or service, or to be used in any way for evaluating the
merits of participating in any transaction, and shall not constitute a solicitation under any jurisdiction or to any person,
if such solicitation under such jurisdiction or to such person
would be unlawful.

\bibliography{refs}

\begin{thebibliography}{25}
\providecommand{\natexlab}[1]{#1}
\providecommand{\url}[1]{\texttt{#1}}
\expandafter\ifx\csname urlstyle\endcsname\relax
  \providecommand{\doi}[1]{doi: #1}\else
  \providecommand{\doi}{doi: \begingroup \urlstyle{rm}\Url}\fi

\bibitem[Dwork et~al.(2012)Dwork, Hardt, Pitassi, Reingold, and
  Zemel]{Dwork2012}
Cynthia Dwork, Moritz Hardt, Toniann Pitassi, Omer Reingold, and Richard Zemel.
\newblock Fairness through awareness.
\newblock In \emph{Proceedings of the 3rd Innovations in Theoretical Computer
  Science Conference}, ITCS '12, 2012.

\bibitem[Ilvento(2019)]{ilvento2019}
Christina Ilvento.
\newblock Metric learning for individual fairness, 2019.
\newblock URL \url{https://arxiv.org/abs/1906.00250}.

\bibitem[Yona and Rothblum(2018)]{yona2018probably}
Gal Yona and Guy Rothblum.
\newblock Probably approximately metric-fair learning.
\newblock In \emph{International Conference on Machine Learning}, pages
  5680--5688. PMLR, 2018.

\bibitem[Kim et~al.(2018)Kim, Reingold, and Rothblum]{Kim2018}
Michael~P. Kim, Omer Reingold, and Guy~N. Rothblum.
\newblock Fairness through computationally-bounded awareness.
\newblock In \emph{Proceedings of the 32nd International Conference on Neural
  Information Processing Systems}, NIPS'18, page 4847–4857, Red Hook, NY,
  USA, 2018. Curran Associates Inc.

\bibitem[Mukherjee et~al.(2020)Mukherjee, Yurochkin, Banerjee, and Sun]{emp1}
Debarghya Mukherjee, Mikhail Yurochkin, Moulinath Banerjee, and Yuekai Sun.
\newblock Two simple ways to learn individual fairness metrics from data.
\newblock In \emph{Proceedings of the 37th International Conference on Machine
  Learning}, ICML'20. JMLR.org, 2020.

\bibitem[Wang et~al.(2019)Wang, Grgic-Hlaca, Lahoti, Gummadi, and Weller]{emp2}
Hanchen Wang, Nina Grgic-Hlaca, Preethi Lahoti, Krishna~P. Gummadi, and Adrian
  Weller.
\newblock An empirical study on learning fairness metrics for compas data with
  human supervision, 2019.
\newblock URL \url{https://arxiv.org/abs/1910.10255}.

\bibitem[Bellet et~al.(2013)Bellet, Habrard, and Sebban]{bellet:2013}
Aur{\'e}lien Bellet, Amaury Habrard, and Marc Sebban.
\newblock {A Survey on Metric Learning for Feature Vectors and Structured
  Data}.
\newblock Research report, {Laboratoire Hubert Curien UMR 5516}, 2013.
\newblock URL \url{https://hal.inria.fr/hal-01666935}.

\bibitem[Kulis(2013)]{kulis2013}
Brian Kulis.
\newblock Metric learning: A survey.
\newblock 2013.

\bibitem[Moutafis et~al.(2017)Moutafis, Leng, and Kakadiaris]{moutafis}
Panagiotis Moutafis, Mengjun Leng, and Ioannis~A. Kakadiaris.
\newblock An overview and empirical comparison of distance metric learning
  methods.
\newblock \emph{IEEE Transactions on Cybernetics}, 47\penalty0 (3):\penalty0
  612--625, 2017.
\newblock \doi{10.1109/TCYB.2016.2521767}.

\bibitem[Suárez-Díaz et~al.(2018)Suárez-Díaz, García, and
  Herrera]{suarez2018}
Juan~Luis Suárez-Díaz, Salvador García, and Francisco Herrera.
\newblock A tutorial on distance metric learning: Mathematical foundations,
  algorithms, experimental analysis, prospects and challenges (with appendices
  on mathematical background and detailed algorithms explanation), 2018.
\newblock URL \url{https://arxiv.org/abs/1812.05944}.

\bibitem[Frome et~al.(2007)Frome, Singer, Sha, and Malik]{frome07}
Andrea Frome, Yoram Singer, Fei Sha, and Jitendra Malik.
\newblock Learning globally-consistent local distance functions for shape-based
  image retrieval and classification.
\newblock In \emph{2007 IEEE 11th International Conference on Computer Vision},
  pages 1--8, 2007.
\newblock \doi{10.1109/ICCV.2007.4408839}.

\bibitem[Jamieson and Nowak(2011)]{jamieson11}
Kevin~G. Jamieson and Robert~D. Nowak.
\newblock Low-dimensional embedding using adaptively selected ordinal data.
\newblock In \emph{2011 49th Annual Allerton Conference on Communication,
  Control, and Computing (Allerton)}, pages 1077--1084, 2011.
\newblock \doi{10.1109/Allerton.2011.6120287}.

\bibitem[Tamuz et~al.(2011)Tamuz, Liu, Belongie, Shamir, and Kalai]{tamuz}
Omer Tamuz, Ce~Liu, Serge Belongie, Ohad Shamir, and Adam~Tauman Kalai.
\newblock Adaptively learning the crowd kernel.
\newblock In \emph{Proceedings of the 28th International Conference on
  International Conference on Machine Learning}, ICML'11, page 673–680.
  Omnipress, 2011.
\newblock ISBN 9781450306195.

\bibitem[van~der Maaten and Weinberger(2012)]{vader}
Laurens van~der Maaten and Kilian Weinberger.
\newblock Stochastic triplet embedding.
\newblock In \emph{2012 IEEE International Workshop on Machine Learning for
  Signal Processing}, pages 1--6, 2012.
\newblock \doi{10.1109/MLSP.2012.6349720}.

\bibitem[Wilber et~al.(2014)Wilber, Kwak, and Belongie]{Wilber}
Michael Wilber, Iljung Kwak, and Serge Belongie.
\newblock Cost-effective hits for relative similarity comparisons.
\newblock \emph{Proceedings of the AAAI Conference on Human Computation and
  Crowdsourcing}, 2\penalty0 (1):\penalty0 227--233, Sep. 2014.
\newblock URL \url{https://ojs.aaai.org/index.php/HCOMP/article/view/13152}.

\bibitem[Chakrabarti et~al.(2022)Chakrabarti, Dickerson, Esmaeili, Srinivasan,
  and Tsepenekas]{equ}
Darshan Chakrabarti, John~P. Dickerson, Seyed~A. Esmaeili, Aravind Srinivasan,
  and Leonidas Tsepenekas.
\newblock A new notion of individually fair clustering: $\alpha$-equitable
  $k$-center.
\newblock In Gustau Camps-Valls, Francisco J.~R. Ruiz, and Isabel Valera,
  editors, \emph{Proceedings of The 25th International Conference on Artificial
  Intelligence and Statistics}, volume 151 of \emph{Proceedings of Machine
  Learning Research}, pages 6387--6408. PMLR, 28--30 Mar 2022.

\bibitem[Lahoti et~al.(2019)Lahoti, Gummadi, and Weikum]{graph}
Preethi Lahoti, Krishna~P. Gummadi, and Gerhard Weikum.
\newblock Operationalizing individual fairness with pairwise fair
  representations.
\newblock \emph{Proc. VLDB Endow.}, 13\penalty0 (4):\penalty0 506–518, dec
  2019.
\newblock ISSN 2150-8097.
\newblock \doi{10.14778/3372716.3372723}.
\newblock URL \url{https://doi.org/10.14778/3372716.3372723}.

\bibitem[Jung et~al.(2019)Jung, Kearns, Neel, Roth, Stapleton, and Wu]{jung19}
Christopher Jung, Michael Kearns, Seth Neel, Aaron Roth, Logan Stapleton, and
  Zhiwei~Steven Wu.
\newblock An algorithmic framework for fairness elicitation, 2019.
\newblock URL \url{https://arxiv.org/abs/1905.10660}.

\bibitem[Hochbaum and Shmoys(1985)]{Hochbaum1985}
Dorit~S. Hochbaum and David~B. Shmoys.
\newblock A best possible heuristic for the k-center problem.
\newblock \emph{Mathematics of Operations Research}, 10\penalty0 (2):\penalty0
  180--184, 1985.
\newblock ISSN 0364765X, 15265471.
\newblock URL \url{http://www.jstor.org/stable/3689371}.

\bibitem[Niño-Adan et~al.(2021)Niño-Adan, Manjarres, Landa-Torres, and
  Portillo]{NINOADAN}
Iratxe Niño-Adan, Diana Manjarres, Itziar Landa-Torres, and Eva Portillo.
\newblock Feature weighting methods: A review.
\newblock \emph{Expert Systems with Applications}, 184:\penalty0 115424, 2021.
\newblock ISSN 0957-4174.

\bibitem[Dua and Graff(2017)]{Dua:2019}
Dheeru Dua and Casey Graff.
\newblock {UCI} machine learning repository, 2017.

\bibitem[Kohavi(1996)]{kohavi}
Ron Kohavi.
\newblock Scaling up the accuracy of naive-bayes classifiers: A decision-tree
  hybrid.
\newblock In \emph{Proceedings of the Second International Conference on
  Knowledge Discovery and Data Mining}, KDD’96, page 202–207. AAAI Press,
  1996.

\bibitem[Yeh and Lien(2009)]{yeh}
Ivy Yeh and Che-Hui Lien.
\newblock The comparisons of data mining techniques for the predictive accuracy
  of probability of default of credit card clients.
\newblock \emph{Expert Systems with Applications}, 36:\penalty0 2473--2480, 03
  2009.
\newblock \doi{10.1016/j.eswa.2007.12.020}.

\bibitem[Credit~Fusion(2011)]{GiveMeSomeCredit}
Will~Cukierski Credit~Fusion.
\newblock Give me some credit, 2011.
\newblock URL \url{https://kaggle.com/competitions/GiveMeSomeCredit}.

\bibitem[Ding et~al.(2021)Ding, Hardt, Miller, and
  Schmidt]{NEURIPS2021_32e54441}
Frances Ding, Moritz Hardt, John Miller, and Ludwig Schmidt.
\newblock Retiring adult: New datasets for fair machine learning.
\newblock In M.~Ranzato, A.~Beygelzimer, Y.~Dauphin, P.S. Liang, and J.~Wortman
  Vaughan, editors, \emph{Advances in Neural Information Processing Systems},
  volume~34, pages 6478--6490. Curran Associates, Inc., 2021.

\end{thebibliography}

\appendix
\section{Missing Proofs}\label{sec:apdx-1}

\begin{theorem}\label{sigma-metric}
The function $\sigma$ defined in Equation (\ref{func}) is not metric, and satisfies properties $\mathcal{M}_1, \mathcal{M}_2$ for $d_1(x,y) = d_2(x,y) = \sqrt{\sum_{i \in [d]}(x_i - y_i)^2}$, when $\alpha_i, \beta_i \in [0,1]$ for all $i \in [d]$.
\end{theorem}

\begin{proof}
The easiest way to see that $\sigma(x,y)$ is not a metric, is by realizing that it does not satisfy the symmetry property and also $x=y$ does not necessarily imply $\sigma(x,y) = 0$. Both of these issues stem from the offsets $\theta_i$. To verify that symmetry is violated let $x,y$ be 1-dimensional, and let $x=1, y=2, \alpha=1, \beta=1, \theta=2$. Then $\sigma(1,2) = 1$, while $\sigma(2,1) = 3$. Next, let $x=1, y=1, \alpha=1, \beta=1, \theta=1$. In this example, although $x=y$ we have $\sigma(x,y) = 1 \neq 0$.

In the following we are going to show that $\sigma$ satisfies $\mathcal{M}_1$. The proof for $\mathcal{M}_1$ is identical. At first, take $x,y,z \in \mathbb{R}^d$, such that $x,z \in \mathcal{D}_1$ and $y \in \mathcal{D}_2$. Then:
\begin{align*}
    \sigma(x,y) &= \sqrt[\leftroot{-1}\uproot{2}\scriptstyle 3]{\sum_{i \in [d]}|\alpha_i \cdot x_i - \beta_i \cdot y_i + \theta_i|^3} = \sqrt[\leftroot{-1}\uproot{2}\scriptstyle 3]{\sum_{i \in [d]}|(\alpha_i \cdot x_i - \alpha_i \cdot z_i) + (\alpha_i \cdot z_i - \beta_i \cdot y_i + \theta_i)|^3} \\
    &\leq \sqrt[\leftroot{-1}\uproot{2}\scriptstyle 3]{\sum_{i \in [d]}|(\alpha_i \cdot x_i - \alpha_i \cdot z_i)|^3} + \sqrt[\leftroot{-1}\uproot{2}\scriptstyle 3]{\sum_{i \in [d]}|(\alpha_i \cdot z_i - \beta_i \cdot y_i + \theta_i)|^3} \\
    &= \sqrt[\leftroot{-1}\uproot{2}\scriptstyle 3]{\sum_{i \in [d]}a^3_i|x_i - z_i|^3} + \sigma(z,y) \leq \sqrt[\leftroot{-1}\uproot{2}\scriptstyle 3]{\sum_{i \in [d]}|x_i - z_i|^3} + \sigma(z,y) \leq d_1(x,z) + \sigma(z,y)
\end{align*}
To get the first inequality we used the triangle inequality for $\ell_3$. The second to last inequality is because $\alpha^3_i \leq 1$ for all $i$, and the last inequality is because the $\ell_3$ norm of a vector is always smaller than its $\ell_2$ norm. 
\end{proof}

\end{document}